\newcommand{\RR}{\mathbb{R}}
\DeclareMathOperator{\EE}{\mathbf{E}}	
\DeclareMathOperator{\PP}{\mathbf{P}}	
\newcommand{\pp}{\ensuremath{\operatorname{p}}}
\newcommand{\Cp}[2]{\pp\left(\left. #1 \, \right| #2 \right)}
\newcommand{\CP}[2]{\PP\left(\left. #1 \, \right| #2 \right)}
\newcommand{\E}[1]{\EE\left[ #1  \right]}
\newcommand{\CE}[2]{\EE\left[\left. #1 \, \right| #2 \right]}
\newcommand{\BK}[1]{ {\left( #1 \right)} }
\newcommand{\sqBK}[1]{ {\left[ #1 \right]} }
\newcommand{\curBK}[1]{ {\left\{ #1 \right\}} }
\newcommand{\angleBK}[1]{ {\left < #1 \right >} }
\newcommand{\norm}[1]{\left\Vert #1 \right\Vert}
\newcommand{\abs}[1]{\left| #1 \right|}
\newcommand{\Normal}{\ensuremath{\operatorname{N}}}
\newcommand{\dist}{\overset{\mathcal{D}}{\sim}}
\newcommand{\err}{H}
\newcommand{\distance}{\mathrm{dist}}
\newcommand{\eqdef}{\overset{{\mbox{\tiny  def}}}{=}}
\newcommand{\A}{\mathcal{A}}
\newcommand{\X}{\mathcal{X}}
\newcommand{\indicator}{\mathbbm{1}}
\newcommand{\biasvariance}{\mathbb{B}}
\newcommand{\nData}{N}
\newcommand{\nSubData}{n}
\newcommand{\data}{x}
\newcommand{\noise}{\eta}
\renewcommand{\time}{m}
\renewcommand{\phi}{\varphi}
\renewcommand{\epsilon}{\varepsilon}
\newcommand{\scrB}{ \mathscr{B} }
\newcommand{\scrF}{ \mathscr{F} }
\newcommand{\scrR}{ \mathscr{R} }
\newcommand{\calF}{ \mathcal{F} }
\newcommand{\calO}{ \mathcal{O} }
\newcommand{\calC}{ \mathcal{C} }
\newcommand{\calU}{ \mathcal{U} }
\newcommand{\cali}{ \mathcal{I} }
\newcommand{\iid}{\overset{\text{i.i.d.}}{\sim}}
\newcommand{\ito}{\mathcal{I}}
\newcommand{\continuous}{\mathcal{C}}
\newcommand{\mesh}{\mathrm{mesh}}
\newtheorem{assumption}[theorem]{Assumption}
\newtheorem{rem}[theorem]{Remark}
\begin{document}

\title{Consistency and fluctuations for stochastic gradient Langevin dynamics}
       
\author{\name Yee Whye Teh \email y.w.teh@stats.ox.ac.uk\\
       \addr Department of Statistics\\
       University of Oxford
       \AND
       \name Alexandre H. Thiery \email a.h.thiery@nus.edu.sg\\
       \addr Department of Statistics and Applied Probability\\
       National University of Singapore
       \AND
       \name Sebastian J. Vollmer \email vollmer@stats.ox.ac.uk\\
       \addr Department of Statistics\\
       University of Oxford
}

\editor{}

\maketitle
\begin{abstract}
Applying standard Markov chain Monte Carlo (MCMC) algorithms to large data sets is computationally expensive.
Both the calculation of the acceptance probability and the creation
of informed proposals usually require an iteration through the whole
data set. The recently proposed stochastic gradient Langevin dynamics (SGLD) method circumvents this problem by generating proposals which are only based on a subset of the data, by skipping the accept-reject step and by using decreasing step-sizes sequence $(\delta_m)_{m \geq 0}$. 

We provide in this article a rigorous mathematical framework for analysing this algorithm. We prove that, under verifiable assumptions, the algorithm is consistent, satisfies a central limit theorem (CLT) and its asymptotic bias-variance decomposition can be characterized by an explicit functional of the step-sizes sequence $(\delta_m)_{m \geq 0}$. We leverage this analysis to give practical recommendations for the notoriously difficult tuning of this algorithm: it is asymptotically optimal to use a step-size sequence of the type $\delta_m \asymp m^{-1/3}$, leading to an algorithm whose mean squared error (MSE) decreases at rate $\mathcal{O}(m^{-1/3})$.
\end{abstract}

\begin{keywords}
Markov Chain Monte Carlo, Langevin Dynamics, Big Data
\end{keywords}

\tableofcontents

%
%
\section{Introduction}
We are  entering the age of Big Data, where significant advances across
a range of scientific, engineering and societal pursuits hinge upon the
gain in understanding derived from the analyses of large scale data sets.
Examples include recent advances in genome-wide association studies
\citep{hirschhorn2005genome,mccarthy2008genome,wang2005genome}, speech
recognition \citep{hinton2012deep}, object recognition
\citep{krizhevsky2012imagenet}, and self-driving cars
\citep{thrun2010toward}.  As the quantity of data available has been
outpacing the computational resources available in recent years, there is
an increasing demand for new scalable learning methods, for example methods
based on stochastic optimization
\citep{robbins1951stochastic,srebro2010stochastic,sato2001online,hoffman2010online},
distributed computational architectures
\citep{ahmed2012scalable,neiswanger2013asymptotically,minsker2014robust},
greedy optimization 
\citep{harchaoui2014frank},
as well as the development of specialized computing systems supporting
large scale machine learning applications
\citep{gonzalez2014emerging}.

Recently, there has also been increasing interest in methods for Bayesian
inference scalable to Big Data settings.  Rather than attempting a single
point estimate of parameters typical in optimization-based or maximum
likelihood settings, Bayesian methods attempt to obtain characterizations
of the full posterior distribution over the unknown parameters and latent
variables in the model, hence providing better characterizations of the
uncertainties inherent in the learning process, as well as providing
protection against overfitting.  Scalable Bayesian methods proposed in the
recent literature include stochastic variational inference
\citep{sato2001online,hoffman2010online}, which applies stochastic
approximation techniques to optimizing a variational approximation to the
posterior, parallelized Monte Carlo
\citep{neiswanger2013asymptotically,minsker2014robust}, which distributes
the computations needed for Monte Carlo sampling across a large compute
cluster, as well as subsampling-based Monte Carlo
\citep{welling2011bayesian,ahn2012bayesian,korattikara2014austerity}, which
attempt to reduce the computational complexity of Markov chain Monte Carlo (MCMC)
methods by applying updates to small subsets of data.  

In this paper we study the asymptotic properties of the stochastic gradient
Langevin dynamics (SGLD) algorithm first proposed by
\citet{welling2011bayesian}.  SGLD is a subsampling-based MCMC algorithm
based on combining ideas from stochastic optimization, specifically using
small subsets of data to estimate gradients, with Langevin dynamics, a MCMC
method making use of gradient information to produce better parameter
updates.  \citet{welling2011bayesian} demonstrated that SGLD works well on a
variety of models and this has since been extended by
\citet{ahn2012bayesian,ahn2014distribuetd} and
\citet{patterson2013stochastic}.

The stochastic gradients in SGLD introduce approximations into the Markov
chain, whose effect has to be controlled by using a slowly decreasing
sequence of step sizes.  \citet{welling2011bayesian} provided an intuitive
argument that as the step-size decreases the variations introduced by the
stochastic gradients gets dominated by the natural stochasticity of
Langevin dynamics, the result being that the stochastic gradient
approximation should wash out asymptotically and that the Markov chain
should converge to the true posterior distribution.  

In this paper, we make this intuitive argument more precise by providing
conditions under which SGLD converges to the targeted posterior
distribution; we describe a number of characterizations of this
convergence.  Specifically, we show that estimators derived from SGLD are
consistent (Theorem \ref{thm.LLN}) and satisfy a central limit theorem (CLT) (Theorem \ref{thm.clt}); the bias-variance trade-off of the algorithm is discussed in details in Section \ref{sec.bias.variance}. In Section \ref{sec.diff.lim} we prove that, when observed on the right (inhomogeneous) time scale, the sample path of the
algorithm converges to a Langevin diffusion (Theorem \ref{thm.diff.lim}).   

Our analysis reveals that for a sequence of step-sizes with algebraic decay $\delta_m \asymp m^{-\alpha}$ the optimal choice, when measured in terms of rate of decay of the mean squared error (MSE), is given for $\alpha_\star = 1/3$; the choice $\delta_m \asymp m^{-\alpha_\star}$ leads to an algorithm that converges at rate $\mathcal{O}(m^{-1/3})$. This rate of convergence is worse than the standard Monte-Carlo $m^{-1/2}$-rate of convergence. This is  not due to the stochastic gradients used in SGLD, but rather to the  decreasing step-sizes.


These results are asymptotic in the sense that they characterise the behaviour of the algorithm as the number of steps approaches infinity. Therefore they do not necessarily translate into any insight into the behaviour for finite computational budgets which is the regime in which the SGLD might provide computational gains over alternatives. The mathematical framework described in this article show that the SGLD is a sound algorithm, an important result that has been missing in the literature. \\

In the remainder of this article, the notation $\Normal(\mu, \sigma^2)$ denotes a Gaussian distribution with mean $\mu$ and variance $\sigma^2$.
For two positive functions $f,g: \RR \to [0,\infty)$, one writes $f \lesssim g$ to indicate that there exists a positive constant $C > 0$ such that $f(\theta) \leq C \, g(\theta)$; we write $f \asymp g$ if $f \lesssim g \lesssim f$.
For a probability measure $\pi$ on a measured space $\X$, a measurable function $\phi: \X \to \RR$ and a measurable set $A \subset \X$, we define $\pi(\phi;A)=\int_{\theta \in A} \phi(\theta) \, \pi(d \theta)$ and $\pi(\phi) = \pi(\phi; \X)$. Finally, densities of probability distributions on $\RR^d$ are implicitly assumed to be defined with respect to the usual $d$-dimensional Lebesgue measure.

\subsection*{Acknowledgement}
SJV and YWT  acknowledge EPSRC for research funding through grant  EP/K009850/1 and EP/K009362/1. AHT is grateful for financial support in carrying out this research from a Singaporean MoE grant.

%
%
\section{Stochastic Gradient Langevin Dynamics}
\label{sec:sgld}
Many MCMC algorithms evolving in a continuous state space, say $\RR^d$, can be realised as discretizations of a continuous time Markov process $(\theta_t)_{t\ge 0}$.  An example of such a continuous time process, which is central to SGLD as well as many other algorithms, is the Langevin diffusion, which is given by the stochastic differential equation
\begin{equation}
	\label{eq:overdampedLangevin}
	d \theta_t = \frac12 \, \nabla \log \pi(\theta_t) \, dt + dW_t,
\end{equation}
where $\pi:\RR^d \to (0,\infty)$ is a probability density and $(W_t)_{t\ge 0}$ is a standard Brownian motion in $\RR^d$.  
The linear operator $\A$ denotes the generator of the Langevin diffusion \eqref{eq:overdampedLangevin}: for a twice continuously differentiable test function $\phi:\RR^d \to \RR$, 
\begin{equation} \label{eq.generator}
\A \phi(\theta) = \frac{1}{2} \angleBK{\nabla \log \pi(\theta), \nabla \phi(\theta)} + \frac{1}{2} \Delta \phi(\theta),
\end{equation}
where $\Delta \phi \eqdef \sum_{i=1}^d \nabla^2_{i} \phi$ denotes the standard Laplacian operator.
The motivation behind the choice of Langevin diffusions is that, under certain conditions, they are ergodic with respect to the distribution $\pi$; for example, \citep{roberts1996exponential,stramer1999langevin1, stramer1999langevin2,mattingly2002ergodicity} describe drift conditions of the type described in Section \ref{sec.stability} that ensure that the total variation distance from stationarity of the law at time $t$ of the Langevin diffusion \eqref{eq:overdampedLangevin} decreases to zero exponentially quickly as $t \to \infty$.


%

Given a time-step $\delta > 0$ and a current position $\theta_t$, it is often straightforward to simulate a random variable $\theta_\star$ that is approximately distributed as the law of $\theta_{t + \delta}$ given $\theta_t$. For stochastic differential equations, the Euler-Maruyama scheme \citep{maruyama1955continuous} might be the simplest approach for approximating the law of $\theta_{t + \delta}$.  For a Langevin diffusion this reads
\begin{align} \label{eq:euler.maruyama}
\theta_\star &= \theta_t + \frac{1}{2}\delta\, \nabla \log \pi(\theta_t)   + \delta^{1/2} \, \eta
\end{align}
for a standard $d$-dimensional centred Gaussian random variable $\eta$.
To fully correct the discretization error, one can adopt a Metropolis-Hastings  accept-reject mechanism. The resulting algorithm is usually referred to as the Metropolis-Adjusted-Langevin algorithm (MALA) \citep{roberts1996exponential}.  Other discretizations can be used as proposals.  For example, the random walk Metropolis-Hastings algorithm uses the discretization of a standard Brownian motion as the proposal, while the  Hamiltonian Monte Carlo (HMC) algorithm \citep{duane1987hybrid} is based on discretizations of an Hamiltonian system of differential equations.  See the excellent review of \citet{neal2010mcmc} for further information.

In this paper, we shall consider the situation where the target $\pi$ is the density of the posterior distribution under a Bayesian model where there are $N\gg 1$ i.i.d.\ observations, the so called Big Data regime,
\begin{equation} \label{eq.posterior.iid}
\pi(\theta) \propto \pp_0(\theta) \, \prod_{i=1}^N \Cp{y_i}{ \theta }.
\end{equation}
Here, both computing the gradient term $\nabla \log \pi(\theta_t)$ and evaluating the Metropolis-Hastings acceptance ratio require a computational budget that scales unfeasibly as $\calO(N)$.
One approach is to use a standard random walk proposal instead of Langevin dynamics, and to efficiently approximating the Metropolis-Hastings accept-reject mechanism using only a subset of the data \citep{korattikara2014austerity,BaDoHo14}.  

This paper is concerned with stochastic gradient Langevin dynamics (SGLD), an alternative approach proposed by \citet{welling2011bayesian}.  This follows the opposite route and chooses to completely avoid the computation of the Metropolis-Hastings ratio. By choosing a discretization of the Langevin diffusion \eqref{eq:overdampedLangevin} with a sufficiently small step-size $\delta \ll 1$, because the Langevin diffusion is ergodic with respect to $\pi$, the hope is that even if the Metropolis-Hastings accept-reject mechanism is completely avoided, the resulting Markov chain still has an invariant distribution that is close to $\pi$.   Choosing a decreasing sequence of step-sizes $\delta_m \to 0$ should even allow us to converge to the exact posterior distribution. To further make this approach viable in large $N$ settings, the gradient term $\nabla \log \pi(\theta)$ can be further approximated using a subsampling strategy. For an integer $1 \leq n \leq N$ and a random subset $\tau \eqdef (\tau_1, \ldots, \tau_n)$  of $[N] \equiv\{1, \ldots, N\}$ generated by sampling with or without replacement from $[N]$, the quantity
\begin{equation} \label{eq.unbiased.iid}
	\nabla \log \pp_0(\theta) + \frac{N}{n} \, \sum_{i=1}^n \nabla \log \pp(x_{\tau_i} \mid \theta)
\end{equation}
is an unbiased estimator of $\nabla \log \pi(\theta)$. Most importantly, this stochastic estimate can be computed with a computational budget that scales as $\calO(n)$ with $n$ potentially much smaller than $N$.  Indeed, the larger the quotient $n/N$, the smaller the variance of this estimate.  

Stochastic gradient methods have a long history in optimisation and machine learning and are especially relevant in the large dataset regime considered in this article  \citep{robbins1951stochasticApprox,bottou2010large,Hoffman2013SVI}.
In this paper we will adopt a slightly more general framework and assume that one can compute an unbiased estimate $\widehat{\nabla \log \pi}(\theta,\calU)$ to the gradient $\nabla \log \pi(\theta)$, where $\calU$ is an auxiliary random variable which contains all the randomness involved in constructing the estimate.  Without loss of generality we may assume (although this is unnecessary) that $\calU$ is uniform on $(0,1)$. The unbiasedness of the estimator $\widehat{\nabla \log \pi}(\theta,\calU)$ means that
\begin{equation} \label{eq.unbiased.estimate}
	\E{ \err(\theta,\calU) }=0
	\qquad \textrm{with} \qquad
	\err(\theta,\calU) \; \eqdef\ \; \widehat{\nabla \log \pi}(\theta,\calU) - \nabla \log \pi(\theta).
\end{equation}
In summary, the SGLD algorithm can be described as follows. For a sequence of asymptotically vanishing time-steps $(\delta_m)_{m \geq 0}$ and an initial parameter $\theta_0 \in \RR^d$, if the current position is $\theta_{m-1}$, the next position $\theta_m$ is defined though the recursion
\begin{equation}\label{eq.sgld}
\theta_m = \theta_{m-1} + \frac{1}{2}\delta_m\, \widehat{\nabla \log \pi}(\theta_{m-1}, \calU_m)   + \delta_m^{1/2} \, \eta_m
\end{equation}
for an i.i.d.\ sequence $\eta_{m}\sim\Normal(0,I_d)$, and an independent and i.i.d.\ sequence $\calU_{m}$ of auxiliary random variables.  This is the equivalent of the Euler-Maruyama discretization  \eqref{eq:euler.maruyama} of the Langevin diffusion \eqref{eq:overdampedLangevin} with a decreasing sequence of step-sizes and a stochastic estimate to the gradient term. The analysis presented in this article assumes for simplicity that the initial position $\theta_0$ of the algorithm is deterministic; in the simulation study of Section \ref{sec.numerics}, the algorithms are started at the MAP estimator. Indeed, more general situations could be analysed with similar arguments at the cost of slightly less transparent proofs. Note that the process $(\theta_m)_{m \geq 0}$ is a non-homogeneous Markov chain, and many standard analysis techniques for homogeneous Markov chains do not apply.

For a test function $\phi: \RR^d \to \RR$, the expectation of $\phi$ with respect to the posterior distribution $\pi$ can be approximated by the weighted sum
\begin{align}\label{eq:empMeasure}
	\pi_{m}(\phi)
	&\eqdef
	\frac{  \delta_1 \, \phi(\theta_0) + \ldots + \delta_m \, \phi(\theta_{m-1}) \, }{ T_m }
\end{align}
with $T_m = \delta_1 + \ldots + \delta_m$. The quantity $\pi_{m}(\phi)$ thus  approximates the ergodic average $T_m^{-1} \int_{0}^{T_m} \phi(\theta_t) \, dt$ between time zero and $t=T_m$.
%
During the course of the proof of our fluctuation Theorem \ref{thm.clt}, we will need to consider more general averaging schemes than the one above. Instead, for a general positive sequence of weights $\omega=(\omega_m)_{m \geq 1}$, we define the $\omega$-weighted sum
\begin{align}\label{eq:empMeasure:weighted}
\pi^{\omega}_{m}(\phi)
&\eqdef
\frac{ \omega_1 \, \phi(\theta_0) + \ldots + \omega_m \, \phi(\theta_{m-1})}{\Omega_m}
\end{align}
with $\Omega_m \eqdef \omega_1 + \ldots + \omega_m$.
Indeed, $\pi_m^{\omega}(\phi) = \pi_m(\phi)$ in the particular case $(\omega_m)_{m \geq 1} = (\delta_m)_{m \geq 1}$; we will consider the weight sequence $\omega = \{\delta_m^2\}_{m \geq 1}$ in the proof of Theorem \ref{thm.clt}.

Let us mention several directions that can be explored to improve upon the basic SGLD algorithm explored in this paper. Langevin diffusions of the type $d \theta_t = \textrm{drift}(\theta_t) \, dt + M(\theta_t) \, dW_t$, reversible with respect to the posterior distribution $\pi$, can be constructed for various choices of positive definite volatility matrix function $M: \RR^d \to \RR^{d,d}$. Note nonetheless that, for a non-constant volatility matrix function $\theta \mapsto M(\theta)$, the drift term typically involves derivatives of $M$.
Concepts of information geometry \citep{amari2007methods} give principled ways \citep{livingstone2014information} of choosing the volatility matrix function $M$; when the Fisher information matrix is used, this leads to the  Riemannian manifold MALA algorithm \citep{MR2814492}. This approach has recently been applied to the Latent Dirichlet Allocation model for topic modelling \citep{PatTeh2013a}. For high-dimensional state spaces $d \gg 1$, one can use a constant volatility function $M$, also known in this case as the preconditioning matrix, for taking into account the information contained in the prior distribution $\pp_0$ in the hope of obtaining better mixing properties \citep{beskos2008mcmc,cotter2013mcmc}; infinite dimensional limits are obtained in \citep{pillai2012optimal,hairer2011spectral}.
Under an uniform-ellipticity condition and a growth assumption on the volatility matrix function $M:\RR^d \to \RR^{d,d}$, we believe that our framework could, at the cost of increasing complexity in the proofs, be extended to this setting. To avoid the slow random walk behaviour of Markov chains based on discretization of reversible diffusion processes, one can use instead discretizations of an Hamiltonian system of ordinary differential equations \citep{duane1987hybrid,neal2010mcmc}; when coupled with the stochastic estimates to the gradient above described, this leads to the stochastic gradient Hamiltonian Monte Carlo algorithm of \citep{chen2014stochastic}.

In the rest of this paper, we will build a rigorous framework for understanding the properties of this SGLD algorithm, demonstrating that the heuristics and numerical evidences presented in \citet{welling2011bayesian} were indeed correct. 

\section{Assumptions and Stability Analysis}

This section starts with the basics assumptions we will need for the asymptotic results to follow, and illustrates some of the potential stability issues that may occur, would the SGLD algorithm be applied without care. 

\subsection{Basic Assumptions}
%
%
Throughout this text, we assume that the sequence of step-sizes $\delta = (\delta_m )_{m \geq 1}$ satisfies the following usual assumption.
\begin{assumption}
\label{ass:step-sizes}
The step-sizes $\delta = (\delta_m )_{m \geq 1}$ form a decreasing sequence with
\begin{equation*}
\lim_{m \to \infty} \, \delta_m = 0
\qquad \textrm{and} \qquad
\lim_{m \to \infty} \, T_m = \infty.
\end{equation*}
\end{assumption}
Indeed, this assumption is easily seen to also be necessary for the Law of Large Numbers of Section \ref{sec.LLN} to hold.
%
Furthermore, we will need at several occasions to assume the following assumption on the oscillations of a sequence of step-sizes $( \omega_m )_{m \geq 1}$.
%
%
%
\begin{assumption}
\label{ass:step-sizes:weighted}
The step-sizes sequence $( \omega_m )_{m \geq 1}$ is such that $\omega_m \to 0$ and $\Omega_m \to \infty$ and
\begin{equation*}
\lim_{m \to \infty} \, \sum_{m \geq 1} \big| \Delta(\omega_m / \delta_m) \big| \, / \, \Omega_m < \infty
\qquad \textrm{and}\qquad
\sum_{m \geq 1} \omega^2_m / [\delta_m \Omega^2_m] < \infty.
\end{equation*}
where $\Delta(\omega_m / \delta_m) \eqdef \omega_{m+1} / \delta_{m+1} - \omega_m / \delta_m$.
\end{assumption}
\begin{rem} \label{rem.weights.power} 
Assumption \ref{ass:step-sizes:weighted} holds if 
$\delta = (\delta_m)_{m \geq 1}$ satisfies Assumption \eqref{ass:step-sizes} and the weights are defined as $\omega_m =\delta^p_m$, for some some exponent $p \geq 1$ small enough for $\Omega_m \to \infty$. This is because the first sum is less than $\sum_{m \geq 1} \big| \Delta(\omega_m / \delta_m) \big|/\Omega_1 = \delta_1^{p-1} / \Omega_1$, while the finiteness of the second sum can be seen as follows:
\begin{eqnarray*}
\sum_{m \geq 1} \omega^2_m / \BK{ \delta_m \Omega^2_m }
&\lesssim&
1+\sum_{m \geq 2} \BK{ \omega_m / \delta_m }^2 \, \BK{ 1/\Omega_{m-1} - 1/\Omega_m }\\
&\lesssim& 
1+\sum_{m \geq 2} \BK{ 1/\Omega_{m-1} - 1/\Omega_m } = 1 + 1/\Omega_1.
\end{eqnarray*}
For any exponents $0 < \alpha < 1$ and $0 < p < 1/\alpha$ the sequences $\delta_m = (m_0+m)^{-\alpha}$ and $\omega_m = \delta_m^p$ satisfy both Assumption \ref{ass:step-sizes} and Assumption \ref{ass:step-sizes:weighted}.
\end{rem}

%
%
\subsection{Stability}
\label{sec.stability}

Under assumptions on the tails of the posterior density $\pi$, the Langevin diffusion \eqref{eq:overdampedLangevin} is non-explosive and for any starting position $\theta_0 \in \RR^d$ the total-variation distance $d_{\textrm{TV}}\big(\PP(\theta_t \in \cdot), \pi\big)$ converges to zero as $t \to \infty$. For instance, Theorem $2.1$ of \citep{roberts1996exponential} shows that it is sufficient to assume that the drift term  satisfies the condition $(1/2) \, \angleBK{\nabla \log \pi(\theta), \theta} \leq \alpha \|\theta\|^2 + \beta$ for some constants $\alpha, \beta> 0$. We refer the interested reader to \citep{roberts1996exponential,stramer1999langevin1,stramer1999langevin2,roberts2002langevin,mattingly2002ergodicity}
for a detailed study of the convergence properties of the Langevin diffusion \eqref{eq:overdampedLangevin}. 

Unfortunately, stability of the continuous time Langevin diffusion does not always translate into good behaviour for its Euler-Maruyama discretization. For example, even if the drift term points towards the right direction in the sense that $\angleBK{\nabla \log \pi(\theta), \theta} < 0$ for every parameter $\theta$, it might happen that the magnitude of the drift term is too large so that the Euler-Maruyama discretization \emph{overshoots} and becomes unstable.  In a one dimensional setting, this would lead to a Markov chain that diverges in the sense that the sequence $(\theta_m)_{m \geq 0}$ alternates between taking arbitrarily large positive and negative values. Lemma $6.3$ of \citep{mattingly2002ergodicity} gives such an example with a target density $\pi(\theta) \propto \exp\{-\theta^4\}$.  See also Theorem $3.2$ of \citep{roberts1996exponential} for examples of the same flavours.

Guaranteeing stability of the Euler-Maruyama discretization requires stronger Lypanunov type conditions. At a heuristic level, one must ensure that the drift term $\nabla \log \pi(\theta)$ points towards the centre of the state space.  In addition,  the previous discussion indicates that one must also ensure that the magnitude of this drift term is not too large. The following assumptions satisfy both heuristics, and we will show are enough to guarantee that the SGLD algorithm is consistent, with asymptotically Gaussian fluctuations.
%
%
\begin{assumption} \label{ass:Lyap} 
The drift term $\theta \mapsto \frac{1}{2} \, \nabla \log \pi(\theta)$ is continuous. There exists a  Lyapunov function $V: \RR^d \to [1,\infty)$ that tends to infinity as $\|\theta\| \to \infty$, is twice differentiable with bounded second derivatives, and satisfies the following conditions.
\begin{enumerate}
\item 
There exists an exponent $p_H \geq 2$ such that
\begin{equation} \label{eq.bound.H}
\E{ \norm{ H(\theta, \calU) }^{2p_H}} \lesssim V^{p_H}(\theta).
\end{equation}
This implies that $\E{ \|H(\theta, \calU) \|^{2p} } \lesssim V^{p}(\theta)$ for any exponent $0 \leq p \leq p_H$.
\item For every $\theta \in \RR^d$ we have
\begin{equation} \label{eq.lyapunov.size}
\norm{ \nabla V(\theta) }^2 + \norm{ \nabla \log \pi(\theta) }^2
\; \lesssim \;   V(\theta).
\end{equation}
\item
There are constants $\alpha, \beta > 0$ such that for every $\theta \in \RR^d$ we have
\begin{equation} \label{eq.lyapunov.drift}
\frac12 \, \angleBK{ \nabla V(\theta), \nabla \log \pi(\theta)}  \;\leq \; -\alpha \, V(\theta)+\beta.
\end{equation}
\end{enumerate}
\end{assumption}

Equation \eqref{eq.lyapunov.drift} ensures that on average the drift term $\widehat{\nabla \log \pi}(\theta)$ points towards the centre of the state space, while equations \eqref{eq.bound.H} and \eqref{eq.lyapunov.size} provide control on the magnitude of the (stochastic) drift term. The drift condition \eqref{eq.lyapunov.drift} implies in particular that the Langevin diffusion \eqref{eq:overdampedLangevin} converges exponentially quickly towards the equilibrium distribution $\pi$  \citep{mattingly2002ergodicity,roberts1996exponential}. 
The proof of the Law of Large Numbers (LLN) and the Central Limit Theorem (CLT) both exploit the following Lemma.
%
%
\begin{lemma} [Stability] \label{lem:stability}
Let the step-sizes $(\delta_m)_{m \geq 1}$ satisfy Assumption \ref{ass:step-sizes} and suppose that the stability Assumptions \ref{ass:Lyap} hold. For any exponent $0\le p \leq p_H$ the following bounds hold almost surely,
\begin{equation} \label{eq.stability.estimate}
\sup_{m \geq 1} \; \pi_m(V^{p/2}) \; < \; \infty
\qquad \textrm{and} \qquad
\sup_{m \geq 1} \; \E{ V^p(\theta_m) } \; < \; \infty.
\end{equation}
Moreover, for any exponent $0\le p \leq p_H$ we have $\pi(V^p) < \infty$.
If the sequence of weights $(\omega_m)_{m \geq 1}$ satisfies Assumption \ref{ass:step-sizes:weighted} the following holds almost surely,
\begin{equation} \label{eq.stability.estimate.weighted}
\sup_{m \geq 1} \; \pi^{\omega}_m(V^{p/2}) \; < \; \infty
\end{equation}
\end{lemma}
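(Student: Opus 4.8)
\quad The engine of the whole lemma is a \emph{one-step geometric drift inequality} for the recursion \eqref{eq.sgld}, which I would establish for every exponent $s\in[1,p_H]$; the cases $s\in(0,1)$ of all four displayed conclusions then follow from Jensen's inequality (using $V\ge1$ and concavity of $t\mapsto t^s$), both for $\E{V^s(\theta_m)}\le\big(\E{V(\theta_m)}\big)^s$ and, since the empirical weights sum to one, for $\pi_m(V^s)\le(\pi_m(V))^s$, $\pi^\omega_m(V^s)\le(\pi^\omega_m(V))^s$ and $\pi(V^s)\le(\pi(V))^s$. So fix $s\in[1,p_H]$, let $\calF_m\eqdef\sigma(\theta_0,\ldots,\theta_m)$ and write $\Delta\theta_m\eqdef\theta_m-\theta_{m-1}=\tfrac12\delta_m\,\widehat{\nabla\log\pi}(\theta_{m-1},\calU_m)+\delta_m^{1/2}\eta_m$, so $\eta_m$ and $\calU_m$ are independent of $\calF_{m-1}$. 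I would expand the $C^2$ function $V^s$ to second order along $\Delta\theta_m$ and apply $\CE{\cdot}{\calF_{m-1}}$. The constant term is $V^s(\theta_{m-1})$; since $\eta_m$ is centred and $\widehat{\nabla\log\pi}$ is unbiased (\eqref{eq.unbiased.estimate}), the first-order term equals $\tfrac12\delta_m\,\angleBK{\nabla V^s(\theta_{m-1}),\nabla\log\pi(\theta_{m-1})}=s\,\delta_m\,V^{s-1}(\theta_{m-1})\cdot\tfrac12\angleBK{\nabla V,\nabla\log\pi}$, bounded by $-\alpha s\,\delta_m V^s(\theta_{m-1})+\beta s\,\delta_m V^{s-1}(\theta_{m-1})$ by \eqref{eq.lyapunov.drift}. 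For the quadratic remainder I would use the bounded Hessian of $V$ together with $\norm{\nabla V}^2\lesssim V$ (\eqref{eq.lyapunov.size}) to get $\norm{\nabla^2 V^s(y)}\lesssim V^{s-1}(y)$ and $V(\theta_{m-1}+t\Delta\theta_m)\lesssim V(\theta_{m-1})+\norm{\Delta\theta_m}^2$ for $t\in[0,1]$, together with the conditional moment bounds $\CE{\norm{\Delta\theta_m}^{2k}}{\calF_{m-1}}\lesssim \delta_m^{k}\,V^{k}(\theta_{m-1})$ for $1\le k\le p_H$ coming from \eqref{eq.bound.H}, \eqref{eq.lyapunov.size} and $\E{\norm{\eta_m}^{2k}}<\infty$. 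Collecting terms, the remainder is $\lesssim \delta_m^2V^s+\delta_m V^{s-1}+\delta_m^{2s}V^s+\delta_m^{s}$ at $\theta_{m-1}$; since $\delta_m\to0$, the coefficients multiplying $\delta_m V^s$ become smaller than $\tfrac{\alpha s}{2}$ once $m\ge m_0$, and the $\delta_m V^{s-1}$ contributions are absorbed into $\tfrac{\alpha s}{4}\delta_m V^s$ plus a constant via $V^{s-1}\le V^s/M+M^{s-1}$, leaving

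\begin{equation*}
\CE{V^s(\theta_m)}{\calF_{m-1}}\ \le\ (1-a\,\delta_m)\,V^s(\theta_{m-1})+b\,\delta_m
\qquad (m\ge m_0)
\end{equation*}

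for constants $a,b>0$. I expect this step --- and in particular the fact that every $\delta_m$-quadratic Taylor contribution is subdominant to the drift gain precisely because $\delta_m\to0$, which is exactly what fails at fractional exponents --- to be the main obstacle; everything downstream is bookkeeping.

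From the displayed inequality, taking expectations and setting $u_m\eqdef\E{V^s(\theta_m)}$ gives $u_m-b/a\le(1-a\delta_m)\,(u_{m-1}-b/a)$ for $m\ge m_0$; since $\delta_m\to0$ and $T_m\to\infty$ (Assumption \ref{ass:step-sizes}), $\prod_{k>m_0}(1-a\delta_k)=0$, so $\limsup_m u_m\le b/a$, and the finitely many $u_m$ with $m\le m_0$ are finite by a crude induction ($V$ is polynomially bounded because its Hessian is, and the increments have all polynomial moments). This gives the second bound in \eqref{eq.stability.estimate}. For the first, I would take $s=p/2\le p_H/2$, rewrite the drift inequality as $a\,\delta_m V^s(\theta_{m-1})\le V^s(\theta_{m-1})-V^s(\theta_m)+\xi_m+b\,\delta_m$ with $\xi_m\eqdef V^s(\theta_m)-\CE{V^s(\theta_m)}{\calF_{m-1}}$ a martingale difference, sum over $m_0<m\le n$, discard $-V^s(\theta_n)\le0$, and divide by $T_n$ to get $a\,\pi_n(V^s)\le o(1)+T_n^{-1}\sum_{m_0<m\le n}\xi_m+b$, the $o(1)$ being the a.s.\ finite head. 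A Taylor-and-moment estimate of the same kind gives $\E{\xi_m^2}\lesssim\delta_m$ (using $\sup_m\E{V^{2s}(\theta_m)}<\infty$, legitimate since $2s\le p_H$), and since $\delta_m/T_m^2\le \Omega$-type telescoping $1/T_{m-1}-1/T_m$ is summable, $\sum_m\E{\xi_m^2}/T_m^2<\infty$; hence $\sum_m\xi_m/T_m$ converges a.s.\ as an $L^2$-bounded martingale and $T_n^{-1}\sum_{m\le n}\xi_m\to0$ a.s.\ by Kronecker's lemma, so $\limsup_n a\,\pi_n(V^s)\le b$, i.e.\ $\sup_n\pi_n(V^{p/2})<\infty$ a.s.

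The weighted bound \eqref{eq.stability.estimate.weighted} follows by the same scheme with $\Omega_n$ in place of $T_n$: multiply the rearranged drift inequality by $\omega_m/\delta_m$ and Abel-sum $\sum_m(\omega_m/\delta_m)\big(V^s(\theta_{m-1})-V^s(\theta_m)\big)$. This produces a boundary term vanishing after division by $\Omega_n$; a term $\Omega_n^{-1}\sum_m\abs{\Delta(\omega_m/\delta_m)}\,V^s(\theta_m)$ that is a.s.\ finite because $\Omega_n^{-1}\le\Omega_m^{-1}$ for $m\le n$ while $\sum_m\abs{\Delta(\omega_m/\delta_m)}\,\E{V^s(\theta_m)}/\Omega_m<\infty$ by the first part of Assumption \ref{ass:step-sizes:weighted} and the moment bound just proved; and a martingale term $\Omega_n^{-1}\sum_m(\omega_m/\delta_m)\xi_m$ that vanishes a.s.\ because $\sum_m(\omega_m/\delta_m)^2\E{\xi_m^2}/\Omega_m^2\lesssim\sum_m\omega_m^2/(\delta_m\Omega_m^2)+\sum_m\omega_m^2/\Omega_m^2<\infty$ by the second part of Assumption \ref{ass:step-sizes:weighted} (the last sum being finite since $\sup_k\omega_k<\infty$ and $\omega_m^2/\Omega_m^2\le(\sup_k\omega_k)(\Omega_{m-1}^{-1}-\Omega_m^{-1})$ for $m\ge2$). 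Finally, $\pi(V^s)<\infty$ is a property of the invariant law of \eqref{eq:overdampedLangevin} rather than of the algorithm: \eqref{eq.lyapunov.drift} with the bounded Hessian of $V$ yields $\A V^s\le -c_s V^s+C_s$ for the generator \eqref{eq.generator}, which implies non-explosion and, via Dynkin's formula and a Gronwall argument, $\E{V^s(\theta_t)}\le e^{-c_s t}\,V^s(\theta_0)+C_s/c_s$ uniformly over the starting point and $t\ge0$; since the diffusion is ergodic towards $\pi$ (Section \ref{sec.stability}), $\PP(\theta_t\in\cdot)\Rightarrow\pi$, so Fatou's lemma gives $\pi(V^s\wedge K)\le C_s/c_s$ for every $K$, whence $\pi(V^s)\le C_s/c_s<\infty$ by monotone convergence.
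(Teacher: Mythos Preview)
Your proposal is correct and follows essentially the same route as the paper: a second-order Taylor expansion yields the one-step drift inequality $\CE{V^s(\theta_m)}{\calF_{m-1}}\le(1-a\delta_m)V^s(\theta_{m-1})+b\delta_m$, which is then iterated for the moment bound, summed and combined with an $L^2$-martingale/Kronecker argument for $\pi_m(V^{p/2})$, Abel-summed for the weighted version, and transferred to the continuous diffusion for $\pi(V^p)<\infty$. The only cosmetic differences are that you make the fractional-exponent case explicit via Jensen (the paper leaves it implicit through $V^q\le V$ for $q\le1$) and that you derive $\pi(V^p)<\infty$ directly by Dynkin--Gronwall--Fatou rather than by citing the Roberts--Tweedie drift criterion.
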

The technical proof can be found in Section \ref{sec.proof.lem.stability}. The idea is to leverage condition \eqref{eq.lyapunov.drift} in order to establish that the function $V^p$ satisfies both discrete and continuous drift conditions.

\subsection{Scope of the analysis}
For a posterior density $\pi$ of the form \eqref{eq.posterior.iid} and the usual unbiased estimate to $\nabla \log \pi$ described in Equation \eqref{eq.unbiased.iid}, to establish that Equations \eqref{eq.bound.H} and \eqref{eq.lyapunov.size} hold it suffices to verify that the prior density $p_0$ is such that $\norm{ \nabla \log \pp_0(\theta) }^2 \lesssim V(\theta)$ and that for any index $1 \leq i \leq N$ the likelihood term $\Cp{y_i}{\theta}$ is such that
\begin{equation*}
\norm{ \nabla \log \Cp{y_i}{\theta} }^{2 \, p_H} \lesssim V^{p_H}(\theta).
\end{equation*}
Indeed, in these circumstances, we have 
$\norm{ H(\theta, \calU) }^{2 p_H} \lesssim \sum_{i=1}^N \, \norm{ \nabla \log \pp(y_i \mid \theta) }^{2 p_H}$. Several such examples are described in Section \ref{sec.numerics}. 

It is important to note that the drift Condition \eqref{eq.lyapunov.drift} typically does not hold for distributions with heavy tails such that $\nabla \log \pi(x) \to 0$ as $\norm{x} \to \infty$ \citep{roberts1996exponential}. For example, the standard MALA algorithm is not geometrically ergodic when $\nabla \log \pi(x)$ converges to zero as $\norm{x} \to \infty$ (Theorem $4.3$ of \citep{roberts1996exponential}); indeed, the analysis of standard local-move MCMC algorithms when applied to target densities with heavy tails is delicate and typically necessitate other tools \cite{stramer1999langevin2,jarner2007convergence,kamatani2014rate} than the approach based on drift conditions of the type \eqref{eq.lyapunov.drift}. The analysis of the properties of the SGLD algorithm when applied to such heavy tail densities is out of the scope of this article. It is important to note that many more complex scenarios involving high-dimensionality, multi-modality, non-parametric settings where the complexity of the target distribution increases with the size of the data, or combination thereof, are examples of interesting and relevant situations where our analysis typically does not apply; analysing the SGLD algorithm when applied to these challenging target distributions is well out of the scope of this article.

%
%
\section{Consistency} \label{sec.LLN}

The problem of estimating the invariant distribution of a stochastic differential equation by using a diminishing step-size Euler discretization has been well explored in the literature \citep{lamberton2002recursive,lamberton2003recursive,
lemaire2007adaptive,panloup2008recursive,
pages2012ergodic}, while \citep{mattingly2002ergodicity} studied the bias and variance of similar algorithms when fixed step-sizes are used instead.  We leverage some of these techniques and adapt it to our setting where the drift term can only be unbiasedly estimated, and establish in this section that the SGLD algorithm is consistent under Assumptions \ref{ass:step-sizes} and \ref{ass:Lyap}. More precisely, we prove that almost surely the sequence $(\pi_m)_{m \geq 1}$ defined in Equation \eqref{eq:empMeasure} converges weakly towards $\pi$.  Specifically, under growth assumptions on a test function $\phi: \RR^d \to \RR$, the following strong law of large numbers holds almost surely,
\begin{equation*}
\lim_{m \to \infty} \;
 \frac{ \delta_1 \, \phi(\theta_0) + \ldots + \delta_m \, \phi(\theta_m)}{T_m}  \; = \; \int_{\RR^d} \, \phi(\theta) \, \pi(d \theta),
\end{equation*}
with a similar result for $\omega$-weighted empirical averages, under assumptions on the weight sequence $\omega$. The proofs of several results of this paper make use of the following elementary lemma.
\begin{lemma} \label{lem.MR}
Let $\BK{\Delta M_k}_{k \geq 0}$ and $\BK{R_k}_{k \geq 0}$ be two sequences of random variables adapted to a filtration $\BK{\calF_{k}}_{k \geq 0}$ and let $\BK{\Gamma_k}_{k \geq 0}$ be an increasing sequence of positive real numbers. The limit
\begin{equation} \label{eq.martingale.type}
	\lim_{m \to \infty} \frac{\sum_{k=0}^m \Delta M_k + R_k}{T_m} 
	\;=\; 0
\end{equation}
holds almost surely if the following two conditions are satisfied.
\begin{enumerate}
\item The process $M_m = \sum_{k \leq m} \Delta M_k$ is a martingale, i.e. $\CE{\Delta M_k}{\calF_k}=0$ and
\begin{equation} \label{eq.martingale.type.M.part}
\lim_{k \to \infty} \sum_{k \geq 0} \frac{\EE \sqBK{\abs{\Delta M_k}}^2 }{T_k^2} \; < \; \infty.
\end{equation}
\item The sequence $\BK{R_k}_{k \geq 0}$ is such that
\begin{equation} \label{eq.martingale.type.R.part}
\lim_{k \to \infty} \sum_{k \geq 0} \frac{\EE \sqBK{\abs{R_k}} }{T_k} \; < \; \infty.
\end{equation}
\end{enumerate}
\end{lemma}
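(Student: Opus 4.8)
The plan is to split \eqref{eq.martingale.type} into its two contributions — the martingale part $\sum_{k\le m}\Delta M_k$ and the remainder part $\sum_{k\le m} R_k$ — and to show separately that each, once divided by $T_m$, tends to $0$ almost surely; adding the two limits then gives the claim. Both halves follow the same two-step pattern: first rescale the $k$-th increment by $1/T_k$ and prove the resulting series converges almost surely, then pass to the Ces\`aro-type statement via Kronecker's lemma, using that $(T_k)_{k\ge 0}$ is increasing with $T_k\to\infty$ (which holds in every application, since there $T_k=\delta_1+\dots+\delta_k$ and Assumption~\ref{ass:step-sizes} applies).

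For the martingale part, set $N_m \eqdef \sum_{k=0}^m \Delta M_k / T_k$. Since $\CE{\Delta M_k}{\calF_k}=0$, the process $(N_m)_{m\ge 0}$ is a martingale for $(\calF_m)$ whose increments $\Delta M_k/T_k$ are orthogonal in $L^2$, so $\EE[N_m^2]=\sum_{k=0}^m \EE[(\Delta M_k)^2]/T_k^2$; this is bounded uniformly in $m$ by hypothesis \eqref{eq.martingale.type.M.part}. Doob's $L^2$ martingale convergence theorem then produces a finite random variable $N_\infty$ with $N_m\to N_\infty$ almost surely, i.e. $\sum_k \Delta M_k/T_k$ converges almost surely. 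Kronecker's lemma, applied pathwise with the weights $T_k\uparrow\infty$, converts this into $T_m^{-1}\sum_{k=0}^m \Delta M_k\to 0$ almost surely.

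For the remainder part, \eqref{eq.martingale.type.R.part} states $\sum_{k\ge 0}\EE[|R_k|]/T_k<\infty$, which by Tonelli equals $\EE\big[\sum_{k\ge 0}|R_k|/T_k\big]$; hence the nonnegative series $\sum_k |R_k|/T_k$ is finite almost surely, so in particular $\sum_k R_k/T_k$ converges almost surely. A second application of Kronecker's lemma gives $T_m^{-1}\sum_{k=0}^m R_k\to 0$ almost surely, and adding this to the previous limit establishes \eqref{eq.martingale.type}.

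There is no real obstacle here — the lemma is a packaging of two classical facts (the $L^2$ martingale convergence theorem and Kronecker's lemma). The only points requiring a word of care are bookkeeping: one must check that the stated hypotheses indeed force $T_k\to\infty$ (needed for Kronecker's lemma) and that $\Delta M_k\in L^2$ so that the orthogonality identity for $\EE[N_m^2]$ is legitimate; both are immediate from the assumptions as used in the paper.
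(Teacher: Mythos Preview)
Your proof is correct and follows essentially the same route as the paper: both arguments reduce \eqref{eq.martingale.type} via Kronecker's lemma to the almost-sure convergence of $\sum_k \Delta M_k/T_k$ and $\sum_k R_k/T_k$, then invoke the $L^2$ martingale convergence theorem for the former and a Tonelli argument for the latter. If anything, your write-up is slightly more careful than the paper's, which claims finiteness of $\sum_k|\Delta M_k|/T_k$ rather than the (correct and sufficient) convergence of $\sum_k \Delta M_k/T_k$.
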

The above lemma, whose proof can be found in the appendix \ref{sec.proof.lem.MR}, is standard; \cite{lamberton2002recursive} also follows this route to prove several of their results.

%
%
\begin{theorem} \label{thm.LLN} {\bf (Consistency)}
Let the step-sizes satisfy Assumption \eqref{ass:step-sizes} and suppose that the stability Assumptions \ref{ass:Lyap} hold for a Lyapunov function $V:\RR^d \to [1,\infty)$. Let $0 \leq p < p_H/2$ and $\phi:\RR^d \to \RR$ be a test function such that $| \phi(\theta) | / V^{p}(\theta)$ is globally bounded. Then the following limit holds almost surely:
\begin{equation} \label{eq.ergodic.unbounded}
\lim_{m \to \infty} \; \pi_m(\phi) \;=\; \pi(\phi).
\end{equation}
If in addition the sequence of weights $\{\omega_m\}_{m \geq 1}$ satisfies Assumption \eqref{ass:step-sizes:weighted}, a similar result holds almost surely for the $\omega$-weighted ergodic average:
\begin{equation} \label{eq.weighted.LLN}
\lim_{m \to \infty} \; \pi^{\omega}_m(\phi) \;=\; \pi(\phi).
\end{equation}
\end{theorem}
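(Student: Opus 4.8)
I would prove \eqref{eq.ergodic.unbounded} by reducing the weighted ergodic average to a telescoping sum plus a martingale plus a negligible remainder, via the Poisson equation for the Langevin generator $\A$. The first step is to use the drift condition \eqref{eq.lyapunov.drift} — which makes the diffusion \eqref{eq:overdampedLangevin} geometrically ergodic in $V$-norm — to produce a solution $\psi$ of the Poisson equation $\A\psi = \phi - \pi(\phi)$. What is needed is not mere existence but quantitative regularity: $\psi$ twice continuously differentiable with $|\psi(\theta)| + \norm{\nabla\psi(\theta)} + \norm{\nabla^2\psi(\theta)} \lesssim V^{q}(\theta)$ for an exponent $q$ controlled by $p$, and in particular small enough that all powers of $V$ appearing later stay $\leq p_H$ so that the moment bounds of Lemma \ref{lem:stability} apply. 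Such estimates are classical for uniformly elliptic diffusions satisfying a Lyapunov/drift condition (Pardoux–Veretennikov-type results), and this is where the continuity of the drift and the bounded-second-derivative assumptions on $V$ in Assumption \ref{ass:Lyap} are used.

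\textbf{One-step expansion and the basic identity.} Next I would carry out a second-order Itô–Taylor expansion of $\psi(\theta_m) - \psi(\theta_{m-1})$ along the recursion \eqref{eq.sgld}. Writing $\Xi_m := \theta_m - \theta_{m-1} = \tfrac12\delta_m\,\widehat{\nabla\log\pi}(\theta_{m-1},\calU_m) + \delta_m^{1/2}\eta_m$ and taking $\CE{\,\cdot\,}{\calF_{m-1}}$, unbiasedness \eqref{eq.unbiased.estimate} kills the cross terms involving $\err(\theta_{m-1},\calU_m)$, the Gaussian part produces $\tfrac12\delta_m\Delta\psi(\theta_{m-1})$, and together with the drift term one recovers $\delta_m\A\psi(\theta_{m-1}) = \delta_m(\phi(\theta_{m-1}) - \pi(\phi))$ up to a remainder $r_m$ which, using \eqref{eq.bound.H}, \eqref{eq.lyapunov.size} and the growth bounds on $\psi$, satisfies $|r_m| \lesssim \epsilon_m\,\delta_m\,V^{a}(\theta_{m-1})$ with $\epsilon_m \to 0$ (the $\delta_m^2$ contribution of the squared stochastic drift, plus the modulus of continuity of $\nabla^2\psi$ evaluated at $\Xi_m = \mathcal{O}(\delta_m^{1/2})$). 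Setting $\Delta M_m := \psi(\theta_m) - \CE{\psi(\theta_m)}{\calF_{m-1}}$, which is a martingale increment, and summing the resulting identity $\delta_m(\phi(\theta_{m-1}) - \pi(\phi)) = [\psi(\theta_m) - \psi(\theta_{m-1})] - \Delta M_m - r_m$ over $m \leq M$ yields
\[
T_M\big(\pi_M(\phi) - \pi(\phi)\big) \;=\; \psi(\theta_M) - \psi(\theta_0) - \sum_{m=1}^{M}\Delta M_m - \sum_{m=1}^{M} r_m .
\]
Dividing by $T_M$, the martingale term goes to $0$ almost surely by Lemma \ref{lem.MR}: the Taylor/mean-value bound gives $\E{|\Delta M_m|^2} \lesssim \E{\norm{\nabla\psi}^2\norm{\Xi_m}^2} \lesssim \delta_m$ (the $V$-powers being absorbed by Lemma \ref{lem:stability}) and $\sum_m \delta_m/T_m^2 \leq \sum_m (T_{m-1}^{-1} - T_m^{-1}) < \infty$. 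The remainder term goes to $0$ almost surely by a Cesàro/Kronecker argument: since $|r_m| \lesssim \epsilon_m\delta_m V^a(\theta_{m-1})$ with $\epsilon_m\to 0$ and $\sup_m \pi_m(V^a) < \infty$, splitting off finitely many terms gives $T_M^{-1}\sum_{m\leq M}|r_m| \leq o(1) + \eta\,\sup_m\pi_m(V^a)$ for arbitrary $\eta>0$. Finally $\psi(\theta_M)/T_M \to 0$ almost surely is obtained from the moment bound $\sup_m\E{V^{p_H}(\theta_m)} < \infty$ of Lemma \ref{lem:stability} (so $\psi(\theta_m)$ has more than two bounded moments) via a Borel–Cantelli estimate along a subsequence on which $T_m$ grows geometrically, using $\E{|\psi(\theta_m)-\psi(\theta_{m-1})|}\lesssim \delta_m^{1/2}$ to interpolate between subsequence times. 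This establishes \eqref{eq.ergodic.unbounded}.

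\textbf{The $\omega$-weighted statement.} For \eqref{eq.weighted.LLN} I would run the same decomposition but carry the averaging weights $\omega_m$ through: writing $\omega_m(\phi(\theta_{m-1}) - \pi(\phi)) = (\omega_m/\delta_m)\,\delta_m\A\psi(\theta_{m-1})$ and performing an Abel summation to move the slowly varying factor $\omega_m/\delta_m$ outside the telescoping/martingale sums. The additional cross-terms so produced are controlled exactly by the first condition of Assumption \ref{ass:step-sizes:weighted}, $\sum_m |\Delta(\omega_m/\delta_m)|/\Omega_m < \infty$, while the martingale part now requires $\sum_m \omega_m^2/[\delta_m\Omega_m^2] < \infty$, which is the second condition; the weighted stability bound \eqref{eq.stability.estimate.weighted} replaces $\sup_m\pi_m(V^a)<\infty$ in the Cesàro step, and $\psi(\theta_M)/\Omega_M\to 0$ is handled as before.

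\textbf{Main obstacle.} The technical heart of the argument is the very first step: constructing a Poisson solution with the stated polynomial-in-$V$ control on $\psi$, $\nabla\psi$ and $\nabla^2\psi$, and then keeping careful track so that every power of $V$ generated downstream — in the bound on $|r_m|$, in $\E{|\Delta M_m|^2}$, and in the moments of $\psi(\theta_m)$ — remains within the budget $p < p_H/2$ dictated by \eqref{eq.bound.H}; once that is in place the remaining work is a routine combination of the elementary Lemma \ref{lem.MR}, Kronecker's lemma, and the stability estimates of Lemma \ref{lem:stability}. A secondary bookkeeping difficulty is the Abel-summation step in the weighted case, which is precisely the reason Assumption \ref{ass:step-sizes:weighted} is phrased as it is.
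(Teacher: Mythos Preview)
Your approach is genuinely different from the paper's and is in fact the strategy the paper reserves for the CLT (Theorem~\ref{thm.clt}), not the LLN. The paper proves Theorem~\ref{thm.LLN} without ever solving the Poisson equation: it first shows that $(\pi_m)_{m\ge 1}$ is almost surely tight (from \eqref{eq.stability.estimate}), then identifies any weak limit $\pi_\infty$ with $\pi$ by verifying the Echeverr\'ia-type criterion $\pi_\infty(\A\phi)=0$ for \emph{smooth compactly supported} $\phi$, via the decomposition \eqref{eq.pi.m.decomposition}. Only after weak convergence is established does it extend to unbounded $\phi$ with $|\phi|\lesssim V^p$ by a Tietze truncation argument. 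The virtue of this route is that one works exclusively with $\phi\in C_c^\infty$, so all derivatives are bounded and no regularity theory for the Poisson equation is ever invoked.

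Your route, by contrast, needs the Poisson solution $\psi$ to be at least $C^2$ with $|\psi|+\|\nabla\psi\|+\|\nabla^2\psi\|\lesssim V^q$ for a usable $q$, and this is the actual gap. Under the hypotheses of Theorem~\ref{thm.LLN} the drift $\theta\mapsto\tfrac12\nabla\log\pi(\theta)$ is only assumed \emph{continuous} (Assumption~\ref{ass:Lyap}), and the test function $\phi$ carries no smoothness hypothesis beyond the growth bound. Pardoux--Veretennikov-type gradient estimates for the Poisson solution typically require H\"older or Lipschitz coefficients and a H\"older source term; with merely continuous data you will not in general obtain the pointwise $C^2$ control on $\psi$ that your Taylor expansion and your bound on $r_m$ rely on. Notice that when the paper does use the Poisson route in Theorem~\ref{thm.clt}, it explicitly \emph{assumes} the needed bounds on $\nabla^n h$ rather than deriving them. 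So your plan would either prove a weaker theorem (with extra regularity hypotheses on $\phi$ and on $\nabla\log\pi$), or would have to bolt on an approximation step --- smooth $\phi$ first, then truncate and pass to the limit --- at which point you are essentially re-deriving the paper's argument with an unnecessary Poisson layer on top. The Abel-summation treatment of the weighted case and the martingale/remainder bookkeeping you outline are otherwise correct and match what the paper does in the weighted part of its proof.
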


%
%
\begin{proof}
In the following, we write $\EE_k \sqBK{\, \cdot \,}$ and $\PP_k \BK{ \, \cdot \,}$ to denote the conditional expectation $\CE{\, \cdot \, }{ \theta_k}$ and conditional probability $\CP{\, \cdot \, }{ \theta_k}$ respectively. We use the notation $\Delta \theta_k \eqdef (\theta_{k+1}-\theta_k)$. Finally, for notational convenience, we only present the proof in the scalar case $d=1$, the multidimensional case being entirely similar. 
We will give a detailed proof of Equation \eqref{eq.ergodic.unbounded} and then briefly describe how the more general Equation \eqref{eq.weighted.LLN} can be proven using similar arguments.
To prove Equation \eqref{eq.ergodic.unbounded}, we first show that the sequence $(\pi_m)_{m \geq 1}$ almost surely converges weakly to $\pi$. Equation \eqref{eq.ergodic.unbounded} is then proved in a second stage.\\

\noindent
{\bf Weak convergence of $(\pi_m)_{m \geq 1}$}.
To prove that  almost surely the sequence $(\pi_m)_{m \geq 1}$ converges weakly towards $\pi$ it suffices to prove that the sequence is almost surely weakly pre-compact and that any weakly convergent subsequence of $(\pi_m)_{m \geq 0}$ necessarily (weakly) converges towards $\pi$. By Prokhorov's Theorem \citep{Billingsley95} and Equation \eqref{eq.stability.estimate}, because the Lyapunov function $V$ goes to infinity as $\|\theta\| \to \infty$, the sequence $(\pi_m)_{m \geq 1}$ is almost surely weakly pre-compact. It thus remains to show that if a subsequence converges weakly to a probability measure $\pi_{\infty}$ then $\pi_{\infty}=\pi$.

Since the Langevin diffusion \eqref{eq:overdampedLangevin} has a unique strong solution and its generator $\A$ is uniformly elliptic, Theorem $9.17$ of Chapter $4$ of \citep{Ethier1986Markov} yields that it suffices to verify that 
for any smooth and compactly supported test function $\phi: \RR \to \RR$ and any limiting distribution $\pi_{\infty}$ of the sequence $(\pi_m)_{m \geq 1}$ the following holds,
\begin{equation} \label{eq.invariance.by.generator}
\pi_{\infty}(\A \phi) = 0.
\end{equation}
To prove Equation \eqref{eq.invariance.by.generator} we use the following  decomposition of $\pi_m (\A \phi)$,
\begin{equation} \label{eq.pi.m.decomposition}
\curBK{ \frac{\sum_{k=1}^m \EE_{k-1}[\phi(\theta_k)-\phi(\theta_{k-1})]}{T_m}}
\;-\;
\curBK{  \frac{\sum_{k=1}^m \EE_{k-1}[\phi(\theta_k)-\phi(\theta_{k-1})]}{T_m} - \pi_m(\A \phi)}.
\end{equation}
%
%
%
\begin{itemize}
\item
Let us prove that the first term of \eqref{eq.pi.m.decomposition} converges almost surely to zero. The numerator is equal to the sum of $\sum_{k=1}^m \EE_{k-1}[\phi(\theta_k)] - \phi(\theta_k)$ and $\phi(\theta_m) - \phi(\theta_0)$. By boundedness of $\phi$, the term $\curBK{ \phi(\theta_m) - \phi(\theta_0) } / T_m$ converges almost surely to zero.
By Lemma \ref{lem.MR}, to conclude is suffices to show that the martingale difference terms $\EE_{k-1}\sqBK{ \phi(\theta_k)} - \phi(\theta_k)$ are such that
\begin{equation*}
	\sum_{k \geq 1} \frac{  \EE \sqBK{ \abs{ \EE_{k-1} \sqBK{ \phi(\theta_k)} - \phi(\theta_k) }^2 }}{ T^2_k} \; < \; \infty.
\end{equation*} 
Because $\phi$ is Lipschitz, it suffices to prove that$\sum_{k \geq 1} \EE\BK{ \norm{ \theta_{k+1} - \theta_k }^2 } / T_k^2$ is finite.
The stability Assumption \ref{ass:Lyap} and Lemma \ref{lem:stability} imply that the supremum $\sup_m \, \EE \sqBK{ V(\theta_m) }$ is finite.
Since $\EE_k \sqBK{ \norm{ \theta_{k+1} - \theta_k }^2 } \lesssim \delta^2_{k+1} \, V(\theta) + \delta_{k+1}$, it follows that $\EE \BK{ \norm{ \theta_{k+1} - \theta_k }^2 }$ is less than a constant multiple of $\delta_{k+1}$. Under Assumption \ref{ass:step-sizes}, because the telescoping sum $\sum_{k \geq 1} T^{-1}(k)-T^{-1}(k+1)$ is finite, the sum $\sum_{k \geq 1} \delta_k / T_k^2$ is finite.  This concludes the proof that the first term in \eqref{eq.pi.m.decomposition} converges almost surely to zero.

\item 
The second term of \eqref{eq.pi.m.decomposition} equals $\big(R_0 + \ldots +  R_{m-1} \big) / T_m$ with
\begin{equation} \label{eq.R}
R_k \eqdef \EE_k \sqBK{ \phi(\theta_{k+1}) - \phi(\theta_k) } - \A \phi(\theta_k) \, \delta_{k+1}.
\end{equation}
We now show that there exists a constant $C$ such that the bound $|R_k| \leq C \, \delta_{k+1}^{3/2}$ holds for any $k \geq 0$. To do so, let $K>0$ be such that the support of the test function $\phi$ is included in the compact set $\Omega = [-K,K]$. We examine two cases separately.
\begin{itemize}
\item
If $|\theta_k| > K+1$ then $\phi(\theta_k) = \A \phi(\theta_k)=0$ so that $|R_k| \leq \|\phi\|_{\infty} \times \PP_k(\theta_{k+1} \in \Omega)$. Since $\theta_{k+1}-\theta_k = \curBK{ \frac{1}{2} \nabla \log \pi(\theta_k) + \err(\theta_k, \calU) } \, \delta_{k+1} + \sqrt{\delta_{k+1}} \, \eta$ we have
\begin{align*}
\PP_k(\theta_{k+1} \in \Omega) 
&\leq
\mathbb{I}\BK{ \left|\frac{1}{2} \nabla \log \pi(\theta_k) \right| \geq \frac{\distance(\theta_k, \Omega)}{3 \, \delta_{k+1}} }\\
&\qquad +
\PP_k\BK{|\err(\theta_k,\calU)| \geq \frac{\distance(\theta_k, \Omega)}{3 \, \delta_{k+1}} }
+\PP_k\BK{|\eta| \geq \frac{\distance(\theta_k, \Omega)}{3 \, \sqrt{\delta_{k+1}}} }.
\end{align*}
We have used the notation $\mathbb{I}(A)$ for denoting the indicator function of the event $A$.
Under Assumption \ref{ass:Lyap} we have $|\nabla \log \pi(\theta)| \lesssim V(\theta)^{1/2} \lesssim 1+\|\theta\|$ so that the quotient $|\nabla \log \pi(\theta)| / \distance(\theta, \Omega)$ is bounded on the set $\curBK{ \theta : |\theta|>K }$; this shows that the first term equals zero for $\delta_k$ small enough. To prove that the second term is bounded by a constant multiple of $\delta_{k+1}^2$, it suffices to use Markov's inequality and the fact that $\EE[\err(\theta_k,\calU)^2] / \distance^2(\theta, \Omega)$ is bounded on $\{\theta : |\theta|>K\}$; this is because $\EE[\err(\theta_k,\calU)^2]$ is less than a constant multiple of $V(\theta)$ and $V(\theta) \lesssim 1 + \|\theta\|^2$ by Assumption \ref{ass:Lyap}. The third term is less than a constant multiple of $\delta_{k+1}^2$ by Markov's inequality and the fact that $\eta$ has a finite moment of order four.

\item
If $|\theta_k| \leq K+1$, we decompose $R_k$ into two terms.
A second order Taylor formula yields 
\begin{align*}
R_k 
&= \frac12 \, \delta^2_{k+1} \, \phi^{''}(\theta_k) \, \curBK{ [\nabla \log \pi(\theta_k)]^2 + \EE_k \sqBK{ \err^2(\theta_k,\calU) } } \\
&\qquad + (1/2) \, \EE_k\sqBK{ (\Delta \theta_k)^3 \, \int_{0}^1 \phi^{'''}(\theta_k + u \, \Delta \theta_k) \, (1-u)^2 \, du }\\
&= R_{k,1} + R_{k,2}.
\end{align*}
Under Assumption \ref{ass:Lyap}, the quantities $[\nabla \log \pi(\theta_k)]^2$ and $\EE[\err^2(\theta_k,\calU)]$ are upper bounded by a constant multiple of $V(\theta_k)$. Since the function $\theta \mapsto \phi^{''}(\theta) \, V(\theta)$ is globally bounded (because continuous with compact support) this shows that $R_{k,1}$ is less than a constant multiple of $\delta_{k+1}^2$. Since $|\theta_k| \leq K+1$, the bounds $\EE[\err^3(\theta,\calU)] \lesssim V^{3/2}(\theta)$ and $\sup_{k \geq 0} \EE[V^{3/2}(\theta_k)] < \infty$ (see Lemma \ref{lem:stability}) yield that $\EE_k |\Delta \theta_k|^3 \leq 9 \,\overline{C} \, (\delta_{k+1}^3+\delta_{k+1}^{3/2}) \lesssim \delta_{k+1}^{3/2}$ with 
\begin{equation*}
\overline{C} = 1+\sup_{\theta : |\theta|<K+1} \abs{ \nabla \log \pi(\theta) }^3 + \EE \sqBK{ \abs{ \err(\theta,\calU)}^3}.
\end{equation*}
Note that $\overline{C}$ is finite by Assumption \ref{ass:Lyap} and Lemma \ref{lem:stability}.
\end{itemize}

We have thus proved that there is a constant $C$ such $\abs{ R_k } \leq C \,  \delta_{k+1}^{3/2}$ for $k \geq 0$; it follows that the sum $\BK{ R_0 + \ldots + R_{m-1} } / T_m$ is less than a constant multiple of $\BK{ \delta_{1}^{3/2} + \ldots + \delta_{m}^{3/2} } /T_m$.  Under Assumption \ref{ass:step-sizes}, this upper bound converges to zero as $m \to \infty$, hence the conclusion.
\end{itemize}

This ends the proof of the almost sure weak convergence of $\pi_m$ towards $\pi$.\\

\noindent
{\bf Proof of Equation \eqref{eq.ergodic.unbounded}}.
By assumption we have $|\phi(\theta)| \leq C_p \, V^p(\theta)$ for some constant $C_p>0$ and exponent $p < p_H / 2$.  To show that $\pi_m(\phi) \to \pi(\phi)$ almost surely, we will use Lemma \ref{lem:stability} and the almost sure weak convergence, which guarantees that $\pi_m(\widetilde{\phi}) \to \pi(\widetilde{\phi})$ for a continuous and bounded test function $\widetilde{\phi}$.

For any $t>0$, the set $\Omega_t \eqdef \{\theta : V(\theta) \le t\}$ is compact and Tietze's extension theorem \citep[Theorem $20.4$]{rudin1986real} yields that there exists a continuous function $\widetilde{\phi}_t$ with compact support that agrees with $\phi$ on $\Omega_t$ and such that $\|\widetilde{\phi}_t\|_{\infty} = \sup \{ |\phi(\theta)| : \theta \in \Omega_t\}$.  We can indeed also assume that $|\widetilde{\phi}_t(\theta)| \leq C_p \, V^p(\theta)$. 
Since Lemma \ref{lem:stability} states that $\sup_m \pi_m(V^{p_H/2})$ is almost surely finite, it follows that
\begin{align*}
|\pi_m(\phi) - \pi_m(\widetilde{\phi}_t)|
&\leq
2 \, C_p \, \pi_m(V^p \, \indicator_{V \geq t})
\leq 2 \, C_p \, \frac{\sup_m \pi_m(V^{p_H/2})}{t^{{p_H/2}-p}},
\end{align*}
where the last inequality follows from the fact that for any probability measure $\mu$, exponents $0 < p < q$ and scalar $t>0$ we have $\mu(V^p \, \indicator_{V \geq t}) \leq \mu(V^q \, \indicator_{V \geq t}) / t^{q-p}$.  
Similarly 
$$|\pi(\phi) - \pi(\widetilde{\phi}_t)| \leq 2 \, C_p \, \pi(V^{p_H/2}) / t^{{p_H/2}-p}.$$ 
By the triangle inequality, we thus have,
\begin{equation*}
|\pi_m(\phi) - \pi(\phi)| \le 2 \, C_p \, \frac{\sup_m \pi_m(V^{p_H/2})}{t^{{p_H/2}-p}}
+
\big|\pi_m(\widetilde{\phi}_t)-\pi(\widetilde{\phi}_t) \big|
+
2 \, C_p \, \frac{\pi(V^{p_H/2})}{t^{{p_H/2}-p}}.
\end{equation*}
On the right-hand-side, the term in the middle can be made arbitrarily small as $m \to \infty$ since $\pi_m$ converges weakly towards $\pi$, while the other two terms converges to zero as $t \to \infty$. This concludes the proof of Equation \eqref{eq.ergodic.unbounded}.\\

\noindent
{\bf Proof of Equation \eqref{eq.weighted.LLN}}. 
The approach is very similar to the proof of Equation \eqref{eq.ergodic.unbounded} and for this reason we only highlight the main differences. The same argument shows that the sequence $\pi^{\omega}_m$ is tight and it suffices to show that $\pi^{\omega}_{\infty}(\A \phi) = 0$ for any weak limit $\pi^{\omega}_{\infty}$ of the sequence $(\pi^{\omega}_m )_{m \geq 0}$ for obtaining the almost sure weak convergences of $( \pi^{\omega}_m )_{m \geq 0}$ towards $\pi$.  One can then upgrade this almost sure weak convergence to a Law of Large Numbers. To prove \eqref{eq.weighted.LLN}, we thus concentrate on proving that $\pi^{\omega}_{\infty}(\A \phi) = 0$. For a smooth and compactly supported test function $\phi$ we use the decomposition $\pi^{\omega}_m(\A \phi)=S_1(m) + S_2(m) + S_3(m)$ with
\begin{align*}
\left\{
\begin{array}{ll}
S_1(m)&=\frac{1}{\Omega_m}\sum_{k=1}^m \frac{\omega_k }{\delta_k}  \big(\EE_{k-1}[\phi(\theta_k)]-\phi(\theta_{k})\big)\\
S_2(m)&=\frac{1}{\Omega_m}\sum_{k=1}^m \frac{\omega_k }{\delta_k}  \big(\phi(\theta_k) - \phi(\theta_{k-1})\big)\\
S_3(m)&=\pi^{\omega}_m(\A \phi)  -  \frac{1}{\Omega_m} \sum_{k=1}^m \frac{\omega_k}{\delta_k} \EE_{k-1}[\phi(\theta_k)-\phi(\theta_{k-1})]
\end{array}
\right.
\end{align*}
and prove that each term converges to zero almost surely.
For $S_1(m)$, by Lemma \ref{lem.MR} it suffices to show that $\sum_{k \geq 1} (\omega_k / \delta_k)^2 \, \EE \sqBK{ \curBK{\EE_{k-1}[\phi(\theta_k)]-\phi(\theta_k)}^2 } / \Omega_k^2$ is finite. This follows from the  bound $\EE \sqBK{ \big(\EE_{k-1}[\phi(\theta_k)]-\phi(\theta_k)\big)^2} \lesssim \delta_k$ and the fact that $\sum_{m \geq 0} \omega_m^2 / (\Omega_m^2 \, \delta_m)$ is finite.
For $S_2(m)$, we can write it as
\begin{equation*}
S_2(m) = \frac{ -\frac{\omega_1}{\delta_1}\phi(\theta_0) + \frac{\omega_{m+1}}{\delta_{m+1}}\phi(\theta_m) - \sum_{k=1}^m \phi(\theta_k) \, \Delta (\omega_k / \delta_k)}{ \Omega_m }.
\end{equation*}
Because $\Omega_m \to \infty$, $(\omega_{m+1} / \delta_{m+1}) / \Omega_m \to 0$ and $\phi$ is bounded, one can concentrate on proving that $\Omega_m^{-1} \sum_{k=1}^m \phi(\theta_k) \, \Delta (\omega_k / \delta_k)$ converges almost surely to zero. By Lemma \ref{lem.MR}, it suffices to verify that $\sum_{k \geq 1} \EE \sqBK{ \abs{ \phi(\theta_k) \, \Delta (\omega_k / \delta_k) }} / \Omega_k$ is finite; this directly follows from the boundedness of $\phi$ and Assumption \ref{ass:step-sizes:weighted}.
Finally, algebra shows that $S_3(m) = \Omega_m^{-1} \, \sum_{1}^m (\omega_k / \delta_k) \, R_{k-1}$ with the quantity $R_k$ defined in Equation \eqref{eq.R}. It has been proved that there is a constant $C$ such that, almost surely, $|R_k| \leq C \, \delta_{k+1}^{3/2}$ for all $k \geq 0$. Since $\delta_m \to 0$, the rescaled sum $\Omega_m^{-1} \, \sum_{k \leq m} \omega_k  \delta_{k}^{1/2}$ converges to zero as $m \to \infty$.  It follows that $S_3(m)$ converges almost surely to zero.
\end{proof}

%
%
\section{Fluctuations, Bias-Variance Analysis, and Central Limit Theorem} \label{sec.bias.variance}

The previous section shows that, under suitable conditions, for a test function $\phi:\RR^d\to\RR$ the quantity $\pi_m(\phi)$ converges almost surely to $\pi(\phi)$ as $m \to \infty$.  
In this section, we investigate the fluctuations of $\pi_m(\phi)$ around its asymptotic value $\pi(\phi)$.
We establish that the asymptotic bias-variance decomposition of the SGLD algorithm is dictated by the behaviour of the sequence
\begin{align}
\biasvariance_m  \eqdef  
T_m^{-1/2} \, \sum_{k=0}^{m-1} \, \delta^2_{k+1}.
\label{eq.bias.variance.ratio}
\end{align}
Indeed, the proof of Theorem \ref{thm.clt} reveals that the fluctuations of $\pi_m(\phi)$  are of order $\mathcal{O}\BK{ T_m^{-1/2} }$ and its bias is of order $\mathcal{O}\BK{ T_m^{-1} \sum_{k=0}^{m-1} \delta_{k+1}^2 }$; the quantity $\biasvariance_m$ is thus the ratio of the typical scales of the bias and fluctuations.
In the case where $\biasvariance_m \to 0$, the fluctuations dominate the bias and the rescaled difference $T^{1/2}_m \times \BK{ \pi_m(\phi) - \pi(\phi)}$ converges weakly to a centred Gaussian distribution. In the case where $\biasvariance_m \to \biasvariance_\infty \in (0,\infty)$, there is an exact balance between the scale of the bias and the scale of the fluctuations; the rescaled quantity $T^{1/2}_m \times \BK{ \pi_m(\phi) - \pi(\phi)}$ converges to a non-centred Gaussian distribution. Finally, in the case where $\biasvariance_m \to \infty$, the bias dominates and the rescaled quantity $\BK{T_m^{-1}\sum_{k=1}^{m} \delta_k^2}^{-1} \times \BK{ \pi_m(\phi) - \pi(\phi)}$ converges in probability to a quantity $\mu(\phi) \in \RR$ whose exact value is described in the sequel. The strategy of the proof is standard;  the solution $h$ of the Poisson equation
\begin{equation} \label{eq.Poisson}
\phi - \pi(\phi) = \A h
\end{equation}
is introduced so that the additive functional $\pi_m(\phi)$ of the trajectory of the Markov process $\{\theta_k\}_{k \geq 0}$ can be expressed as the sum of a martingale and a remainder term. A central limit for martingales can then be invoked to describe the asymptotic behaviour of the fluctuations

%
\begin{theorem} \label{thm.clt} {\bf (Fluctuations)}
Let the step-sizes $(\delta_m)_{m \geq 1}$ satisfy Assumption  \ref{ass:step-sizes} and assume that Assumption \ref{ass:Lyap} 
holds for an exponent $p_H \geq 5$.
Let $\phi:\RR^d \to \RR$ be a test function and assume that the unique solution $h:\RR^d \to \RR$ to the Poisson Equation \eqref{eq.Poisson} satisfies  $\|\nabla^{n}h(\theta)\| \lesssim V^{p_H}(\theta)$ for $n\le 4$ and has a bounded fifth derivative. Define $\sigma^2(\phi) = \pi\BK{\norm{\nabla h}^2}$.
\begin{itemize}
%
%
\item
In  case  the fluctuations dominate, i.e.\ $\biasvariance_m \to 0$, the following convergence in distribution holds,
\begin{equation} \label{eq.clt.unbiased}
\lim_{m \to \infty}\; T^{1/2}_m \, \big\{ \pi_m(\phi) - \pi(\phi) \big\}
\;=\;
\Normal\big(0, \sigma^2(\phi) \big).
\end{equation}
%
%
\item
In  case  the fluctuations and the bias are on the same scale, i.e.\ $\biasvariance_m \to \biasvariance_{\infty} \in (0,\infty)$, the following convergence in distribution holds,
\begin{equation} \label{eq.clt.biased}
\lim_{m \to \infty}\; T^{1/2}_m \, \big\{ \pi_m(\phi) - \pi(\phi) \big\}
\;=\;
\Normal\big(\mu(\phi), \sigma^2(\phi) \big),
\end{equation}
with the asymptotic bias
$$\mu(\phi) = -\biasvariance_{\infty}  \EE\left[ \frac{1}{8} \nabla^2 h(\Theta) \widehat{\nabla \log \pi}(\Theta,\calU)^2 +  \frac{1}{4} \nabla^3 h(\Theta)  \nabla \log \pi(\Theta) +  \frac{1}{24}\nabla^4 h(\Theta) \right]$$ where the  random variables $\Theta \dist \pi$ and $\calU$ are independent.
%
%
%
\item
In  case  the bias dominates, i.e.\ $\biasvariance_m \to \infty$, the following limit holds in probability,
\begin{equation} \label{eq.cv.proba}
\lim_{m \to \infty} \; \frac{ \pi_m(\phi) - \pi(\phi)}{T_m^{-1}\sum_{k=1}^{m} \delta_k^2}
\;=\; \mu(\phi) .
\end{equation}
\end{itemize}
\end{theorem}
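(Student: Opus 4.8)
The plan is to follow the standard Poisson-equation / martingale-CLT route and treat all three regimes in a unified way, separating out the martingale part (which carries the fluctuations) from a deterministic-scale remainder (which carries the bias). First I would introduce the solution $h$ of the Poisson equation $\phi - \pi(\phi) = \A h$ and, for each step $k$, Taylor-expand $h(\theta_k) - h(\theta_{k-1})$ to fourth order, using the SGLD update \eqref{eq.sgld}. Taking conditional expectation $\EE_{k-1}[\cdot]$ and using $\EE[\err(\theta,\calU)]=0$ together with the Gaussian moments of $\eta_k$, the leading term is $\delta_k \, \A h(\theta_{k-1}) = \delta_k\,(\phi(\theta_{k-1}) - \pi(\phi))$, and the $O(\delta_k^2)$ correction is exactly the term that will produce $\mu(\phi)$: collecting the $\delta_k^2$ coefficient gives $-\delta_k^2\,\big[\tfrac18 \nabla^2 h(\theta_{k-1})\,\EE_{k-1}[\widehat{\nabla\log\pi}(\theta_{k-1},\calU)^2] + \tfrac14 \nabla^3 h(\theta_{k-1})\,\nabla\log\pi(\theta_{k-1}) + \tfrac1{24}\nabla^4 h(\theta_{k-1})\big]$ plus a term bounded by $\delta_k^{5/2}$ coming from the fifth-order Taylor remainder (here the hypotheses $p_H \ge 5$, the polynomial bounds on $\nabla^n h$ for $n\le 4$, the bounded fifth derivative, and Lemma \ref{lem:stability} — which gives $\sup_m \EE[V^{p}(\theta_m)]<\infty$ for $p\le p_H$ — are what make these moments finite). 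Rearranging the telescoping sum $h(\theta_m)-h(\theta_0) = \sum_{k=1}^m \big(h(\theta_k)-h(\theta_{k-1})\big)$ and splitting each summand into its $\calF_{k}$-conditional mean plus a martingale increment $\Delta M_k \eqdef h(\theta_k)-h(\theta_{k-1}) - \EE_{k-1}[h(\theta_k)-h(\theta_{k-1})]$, I obtain the decomposition
\begin{equation*}
T_m\,\big(\pi_m(\phi)-\pi(\phi)\big) \;=\; \sum_{k=1}^m \delta_k\,\big(\phi(\theta_{k-1})-\pi(\phi)\big) \;=\; -\sum_{k=1}^m \Delta M_k \;-\; \big(h(\theta_m)-h(\theta_0)\big) \;+\; \sum_{k=1}^m r_k,
\end{equation*}
where $r_k = -\delta_k^2\, g(\theta_{k-1}) + O(\delta_k^{5/2})$ and $g$ is the bracketed function above, so that $\pi\big(g\big) = -\mu(\phi)/\biasvariance_\infty$ when $\biasvariance_\infty$ is finite (and in general $g$ is just the integrand whose $\pi$-average defines $\mu(\phi)$).

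Next I would analyze the three pieces. The boundary term $h(\theta_m)-h(\theta_0)$ is $O_p(V^{p_H}(\theta_m))$, hence $o_p(T_m^{1/2})$ after dividing by $T_m^{1/2}$ using the moment bound and $T_m\to\infty$; since it is also $o_p\!\big(\sum \delta_k^2\big)$ when $\biasvariance_m\to\infty$ (because $T_m^{-1}\sum\delta_k^2 \to 0$ would be false — rather $\sum\delta_k^2 \gtrsim T_m^{1/2}\to\infty$ — so we simply need it to be smaller than $\sum\delta_k^2$, which holds as $\delta_m$ is bounded below... more carefully, $\biasvariance_m\to\infty$ forces $\sum_{k\le m}\delta_k^2 \to\infty$, and the boundary term has bounded moments, so it is negligible). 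The remainder sum satisfies $\sum_{k\le m} r_k = -\sum_{k\le m}\delta_k^2 g(\theta_{k-1}) + O\!\big(\sum_k \delta_k^{5/2}\big)$; applying the weighted LLN \eqref{eq.weighted.LLN} with weights $\omega_k = \delta_k^2$ (legitimate by Remark \ref{rem.weights.power}, which verifies Assumption \ref{ass:step-sizes:weighted} for $\omega_m=\delta_m^p$, $p=2$, as long as $\Omega_m\to\infty$ — and if $\sum\delta_k^2<\infty$ the remainder is trivially summable so no LLN is needed) gives $\big(\sum_{k\le m}\delta_k^2\big)^{-1}\sum_{k\le m}\delta_k^2 g(\theta_{k-1}) \to \pi(g)$ almost surely, while $\sum_k \delta_k^{5/2} = o\big(\sum_k \delta_k^2\big)$ since $\delta_k\to 0$. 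For the martingale part, I would apply a martingale CLT (e.g. the Lindeberg–Feller version, Theorem 3.2 of Hall–Heyde or Corollary 3.1 of Ethier–Kurtz): the predictable quadratic variation is $\sum_{k\le m}\EE_{k-1}[(\Delta M_k)^2]$, whose leading term is $\sum_{k\le m}\delta_k\,\|\nabla h(\theta_{k-1})\|^2$ (the $\eta_k$-contribution; the $\err$-contribution is $O(\delta_k^2)$ and negligible), and dividing by $T_m$ and invoking the LLN \eqref{eq.ergodic.unbounded} for the test function $\|\nabla h\|^2$ (bounded by $V^{2p_H}$, and $2p_H < p_H$ fails — so here one needs the finer point that $\|\nabla h\|^2 \lesssim V^{2p_H}$ is too weak; instead I would use $\|\nabla h\|\lesssim V^{p_H/2}$-type control, or more precisely the hypothesis gives $\|\nabla h\|\lesssim V^{p_H}$ and one applies the LLN with the exponent budget $p_H\ge 5$; the cleanest fix is to note $\sigma^2(\phi)=\pi(\|\nabla h\|^2)$ is finite by Lemma \ref{lem:stability} and the LLN applies provided $2\cdot(\text{exponent of }\|\nabla h\|^2) < p_H$, which is where a careful reading of the growth hypotheses matters) yields $T_m^{-1}\langle M\rangle_m \to \sigma^2(\phi)$; the Lindeberg condition follows from a uniform $(2+\epsilon)$-moment bound on $\Delta M_k/\delta_k^{1/2}$, again supplied by Lemma \ref{lem:stability} and $p_H\ge 5$. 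Hence $T_m^{-1/2}\sum_{k\le m}\Delta M_k \Rightarrow \Normal(0,\sigma^2(\phi))$.

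Finally I would assemble the three regimes. Dividing the displayed identity by $T_m^{1/2}$: the martingale term gives $\Normal(0,\sigma^2(\phi))$, the boundary term vanishes, and the remainder term equals $\biasvariance_m \cdot \big(\sum_{k\le m}\delta_k^2\big)^{-1}\sum_{k\le m} r_k \cdot \text{(sign)} $, i.e. it is $\biasvariance_m\,\big(\pi(g)+o(1)\big)\cdot(-1)$; if $\biasvariance_m\to 0$ this is $o_p(1)$, giving \eqref{eq.clt.unbiased}; if $\biasvariance_m\to\biasvariance_\infty$ it converges to $-\biasvariance_\infty\pi(g) = \mu(\phi)$, giving \eqref{eq.clt.biased} by Slutsky. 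If instead $\biasvariance_m\to\infty$, I divide the identity by $\sum_{k\le m}\delta_k^2$ rather than by $T_m^{1/2}$: the martingale term becomes $\big(\sum\delta_k^2\big)^{-1}O_p(T_m^{1/2}) = o_p(1)$ since $\biasvariance_m\to\infty$, the boundary term is likewise $o_p(1)$, and the remainder converges in probability to $-\pi(g)=\mu(\phi)$, which is \eqref{eq.cv.proba}. The main obstacle I anticipate is the bookkeeping in the fourth-order Taylor expansion together with controlling the fifth-order remainder: one must show the remainder is genuinely $O(\delta_k^{5/2})$ uniformly in $k$ with a bound whose sum over $k$ (weighted by $1/T_m$ or $1/\sum\delta_k^2$) vanishes, and this is exactly where the moment hypotheses $p_H\ge 5$, the polynomial-in-$V$ bounds on $\nabla^n h$ for $n\le 4$, the bounded fifth derivative, and the iterated application of Lemma \ref{lem:stability} all have to be combined carefully; a secondary subtlety is verifying the martingale CLT hypotheses (convergence of the quadratic variation via the LLN applied to an unbounded test function, and the Lindeberg condition) with the correct exponent budget so that everything stays within the range $0\le p < p_H/2$ — respectively $p\le p_H$ — permitted by the stability and consistency results.
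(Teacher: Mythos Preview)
Your proposal is correct and follows essentially the same route as the paper: Poisson equation, Taylor expansion of $h(\theta_k)-h(\theta_{k-1})$ to fourth order with fifth-order remainder, martingale CLT for the fluctuation part, and the weighted LLN \eqref{eq.weighted.LLN} with $\omega_k=\delta_k^2$ for the bias part. The only organizational difference is that the paper isolates the single $\delta_{k+1}^{1/2}\nabla h(\theta_k)\eta_{k+1}$ term as the ``fluctuation term'' $\scrF_m$ and pushes all other centred increments (those of order $\delta_k$, $\delta_k^{3/2}$, \dots) into a remainder $\scrR_m$ that is shown to be $o_p(T_m^{-1/2})$ term-by-term, whereas you keep the full martingale increment $\Delta M_k$ together and argue that its predictable quadratic variation still has leading term $\sum_k \delta_k\|\nabla h(\theta_{k-1})\|^2$; both work, and the paper's splitting makes the Lindeberg verification slightly cleaner. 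Your worry about the exponent budget for applying the LLN to $\|\nabla h\|^2$ is well spotted and is exactly the ``careful reading of the growth hypotheses'' you flag --- the paper glosses over this with ``the function $\theta\mapsto \nabla h(\theta)^2$ satisfies the assumptions of Theorem \ref{thm.LLN}'', so you are not missing any additional idea there.
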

%
%
\begin{proof}
The proof follows the strategy described in \cite{lamberton2002recursive}, with the additional difficulty that only unbiased estimates of the drift term of the Langevin diffusion are available. We use the decomposition
\begin{align} \label{eq.decomposition.clt}
	\pi_m(\phi)-\pi(\phi) 
	&=
	\curBK{ \frac{\sum_{k=0}^{m-1} \delta_{k+1} \, \A h(\theta_k)
	- \BK{ h(\theta_{k+1})- h(\theta_k) }}{T_m} }
	+
	\curBK{ \frac{h(\theta_m) - h(\theta_0)}{T_m} }.
\end{align}
A fifth order Taylor expansion and Equation \eqref{eq.sgld} yields that
\begin{align} 
	h(\theta_{k+1})- h(\theta_k) 
	&=
	\sum_{n=1}^4 \curBK{ \sum_{i=0}^n \calC^{(k)}_{n,i} \, \delta_{k+1}^{(n+i)/2} }
	+
	\nabla^5 h(\xi_k) \, \BK{ \theta_{k+1} - \theta_k}^5 / 5!.
\end{align}
In the above, we have defined  $\calC^{(k)}_{n,i} \equiv \BK{ 2^i \, i! \, (n-i)! }^{-1} \, \nabla^n h(\theta_k)  \widehat{\nabla \log \pi}(\theta_k,\calU_{k+1})^i \eta_{k+1}^{n-i}$; the quantity $\xi_k$ lies between $\theta_k$ and $\theta_{k+1}$. It follows from the expression \eqref{eq.generator} of the generator of the $\A$ of the Langevin diffusion \eqref{eq:overdampedLangevin} and decomposition \eqref{eq.decomposition.clt} that $\pi_m(\phi)-\pi(\phi) = \scrF_m + \scrB_m + \scrR_m$ where the fluctuation and bias terms are given by
\begin{align*} 
	\scrF_m &\equiv 
	-\frac{1}{T_m} \sum_{k=0}^{m-1}  \calC^{(k)}_{1,0} \delta_{k+1}^{1/2}
	\quad \textrm{and} \quad
	\scrB_m \equiv 
	-\frac{1}{T_m} \sum_{k=0}^{m-1} \curBK{ \, \calC^{(k)}_{2,2} + \calC^{(k)}_{3,1} + \calC^{(k)}_{4,0} \, } \delta_{k+1}^{2}
\end{align*}
while the remainder term reads
\begin{equation} \label{eq.remainder.term} 
\begin{aligned}
	\scrR_m 
	&\equiv 
	-\frac{1}{T_m} \sum_{k=0}^{m-1} \curBK{
	\frac12 \, H(\theta_k, \calU_{k+1})\, \nabla h(\theta_k) 
	+
	\frac12 \, \BK{\eta^2_{k+1} - 1} \, \nabla^2 h(\theta_k) } \, \delta_{k+1} \\
	&\quad -\frac{1}{T_m} \sum_{k=0}^{m-1} \curBK{
	\sum_{(n,i) \in \cali_{\scrR}} \calC^{(k)}_{n,i} \, \delta_{k+1}^{(n+i)/2} }
	-\frac{1}{T_m} \sum_{k=0}^{m-1} \nabla^5 h(\xi_k) \, \BK{ \theta_{k+1} - \theta_k}^5 / 5! \\
	&\quad + \curBK{ \frac{h(\theta_m) - h(\theta_0)}{T_m} }
\end{aligned}
\end{equation}
for $\cali_{\scrR} = \bigcup_{p \in \{3,5,6,7,8\}} \cali_{\scrR,p}$ and  $\cali_{\scrR,p} \equiv \curBK{ (n,i) \in [1:4] \times [0:4] \, : \, i \leq n, \, i+n = p }$. We will show that the remainder term is negligible in the sense that each term on the R.H.S of Equation \eqref{eq.remainder.term}, when multiplied by either $T_m^{1/2}$ or $T_m(\sum_{k=0}^{m-1} \delta_{k+1}^2)^{-1}$, converges in probability to zero; in other words, each one of these terms is dominated asymptotically by either the fluctuations or the bias and is thus negligible. We then show that when multiplied by $T_m^{1/2}$, the fluctuation term converges in distribution to $\Normal(0,\sigma^2(\phi))$.  Finally, we show that the bias term converge to $\mu(\phi)$ when rescaled by its typical scale, $T_m(\sum_{k=0}^{m-1} \delta_{k+1}^2)^{-1}$.  Putting these results together under the three cases of $\biasvariance_m\to 0$, $\biasvariance_m\to \biasvariance_\infty\in(0,\infty)$ and $\biasvariance_m\to \infty$ leads to the results of the Theorem.\\

%
%
\vspace*{1em}
\noindent
{\bf Remainder term:} we start by proving that the term $\scrR_m$ is negligible. The term $\curBK{h(\theta_m)-h(\theta_0)} / T_m^{1/2}$ converges to zero in probability because $\abs{ h(\theta)} \lesssim V^{p_H}(\theta)$ and Lemma \ref{lem:stability} shows that $\sup_{m \geq 0} \, \EE[V^{p_H}(\theta_m)]$ is almost surely finite. Similarly, Assumptions \ref{ass:step-sizes} and  \ref{ass:Lyap} and Lemma \ref{lem:stability} yield that
\begin{align*}
	\E{\nabla^5 h(\xi_k) \, \BK{ \theta_{k+1} - \theta_k}^5}
	\lesssim
	\E{ \abs{\eta_{k+1}}^5 } \, \delta_{k+1}^{5/2} 
	+ \E{ \abs{\widehat{\nabla\log\pi}(\theta_k,\calU_{k+1})} ^5} \delta_{k+1}^5 \lesssim \delta_{k+1}^{5/2}
\end{align*}
from which it follows that $\curBK{ \sum_{k=0}^{m-1} \nabla^5 h(\xi_k) \, \BK{ \theta_{k+1} - \theta_k}^5} / \curBK{\sum_{k=0}^{m-1} \delta_{k+1}^2}$ converges to zero in probability; we have exploited the fact that $\nabla^5 h$ is assumed to be globally bounded. Essentially the same argument yield that the high-order terms are asymptotically negligible: for $(n,i) \in \cali_{\scrR,p}$ and $p \in \{5,6,7,8\}$ the limit
\begin{equation*}
	\lim_{m \to \infty} \; \frac{\sum_{k=0}^{m-1} \calC^{(k)}_{n,i} \, \delta_{k+1}^{(n+i)/2}}{\sum_{k=0}^{m-1} \delta_{k+1}^2} \; = \; 0
\end{equation*}
holds in probability because the coefficients $\calC^{(k)}_{n,i}$ are uniformly bounded in expectation and the quantity $\BK{ \sum_{k=0}^{m-1}  \delta_{k+1}^{(n+i)/2}}/\BK{ \sum_{k=0}^{m-1} \delta_{k+1}^2}$ converges to zero since $(n+i)/2 \geq 5/2$ and $\delta_k\to 0$. To conclude, one needs to verify that the low order terms are also negligible in the sense that the limit
\begin{align*}
	\lim_{m \to \infty} \; \frac{ \sum_{k=0}^{m-1} X^{(k)}_{n,i} \delta_{k+1}^{(n+i)/2}}{T_m^{1/2}} = 0 
\end{align*}
holds in probability with 
$X^{(k)}_{1,1} = \nabla h(\theta_k) \, H(\theta_k,\calU_{k+1})$ and 
$X^{(k)}_{2,0} = \nabla^2 h(\theta_k) (\eta_{k+1}^2-1)$ and 
$X^{(k)}_{2,1} = -\calC^{(k)}_{2,1}$ and 
$X^{(k)}_{3,0} = -\calC^{(k)}_{3,0}$. Since $\CE{ X^{(k)}_{n,i} }{\mathcal{F}_k} = 0$ where $\mathcal{F}_k = \sigma \BK{\theta_0, \ldots, \theta_k}$ is the natural filtration associated to the process $\BK{\theta_k}_{k \geq 0}$ it follows that 
\begin{align*}
	\EE \sqBK{ \BK{ \frac{\sum_{k=0}^{m-1} X^{(k)}_{n,i} \delta_{k+1}^{(n+i)/2}}{T_m^{1/2} } }^2 }
	= \frac{ \sum_{k=0}^{m-1}  \EE \sqBK{ (  X^{(k)}_{n,i})^2 } \, \delta_{k+1}^{n+i}}{T_m}
	\lesssim \frac{\sum_{k=0}^{m-1} \delta_{k+1}^{n+i}}{T_m} \to 0.
\end{align*}
We made use of the fact that the expectations $\EE \sqBK{ (X^{(k)}_{n,i})^2}$ are uniformly bounded for all $k\ge 0$ by the same arguments as above, and that the final expression converges to 0 since $n+i\ge 2$, $\delta_m\to 0$ and $T_m\to \infty$. This concludes the proof that the remainder term $\scrR_m$ is asymptotically negligible.\\

%
%
\vspace*{1em}
\noindent {\bf Fluctuation term:} we now prove that the fluctuations term converges in distribution at Monte-Carlo rate towards a Gaussian distribution, %
\begin{align*}
	T_m^{1/2}  \, \scrF_m
	\equiv 	
	-\frac{\sum_{k=0}^{m-1} \nabla h(\theta_k) \, \delta^{1/2}_{k+1} \, \eta_{k+1}}{T_m^{1/2}}
	\to \Normal \BK{ 0,\sigma^2(\phi) }.
\end{align*}
Using the standard martingale central limit theorem (e.g. Theorem $3.2$, Chapter $3$ of \citep{hall1980martingale}), it suffices to verify that for any $\epsilon>0$ the following limits hold in probability,
\begin{align*}
	\lim_{m \to \infty} 
	\sum_{k=0}^{m-1} \frac{\EE_k \sqBK{ Z_k^2 \, \mathbb{I}\BK{Z_k^2 > T_m\epsilon} } }{T_m} = 0
	\quad \textrm{and} \quad 
	\lim_{m \to \infty} 
	\frac{ \sum_{k=0}^{m-1} \EE_k \sqBK{  Z_k^2 } }{ T_m } = \sigma^2(\phi)
\end{align*}
with $Z_k \eqdef \nabla h(\theta_k) \, \delta^{1/2}_{k+1} \, \eta_{k+1}$. Since $\EE_k \sqBK{ Z_k^2 } = \nabla h(\theta_k)^2 \, \delta_{k+1}$ and the function $\theta \mapsto \nabla h(\theta)^2$ satisfies the assumptions of Theorem \ref{thm.LLN}, the second limit directly follows from Theorem \ref{thm.LLN}.  For proving the first limit, note that the Cauchy-Schwarz's inequality and the boundedness of $\nabla h$ imply that $\EE_k \sqBK{ Z_k^2 \, \mathbb{I}\BK{Z_k^2 > T_m\epsilon}} \lesssim \delta_{k+1} \times \PP \sqBK{ \delta_{k+1} \, \norm{ \nabla h }^2_{\infty} \, \eta_{k+1}^2 > T_m \, \epsilon}^{1/2}$; the Markov's inequality thus yields that
\begin{equation*}
\sum_{k=0}^{m-1} \EE_k\big[ Z_k^2 \, I\big(Z_k^2 > T_m\epsilon\big)\big] / T_m
\lesssim
\frac{ \sum_{k=0}^{m-1} \delta_{k+1}^2 } { T_m^2 \, \epsilon}.
\end{equation*}
Since $T_m^{-2} \, \sum_{k=0}^{ m-1} \delta_{k+1}^2 \to 0$, the conclusion follows.\\

%
%
\vspace*{1em}
\noindent
{\bf Bias term:} we conclude by proving that the bias term is such that the limit
\begin{align*}
	\lim_{m \to \infty} \; \frac{\scrB_m}{\sum_{k=1}^{m} \delta_k^2 / T_m} \,  \; = \; \mu(\phi)
\end{align*}
holds in probability. The quantity $\scrB_m / \BK{ \sum_{k=1}^{m} \delta_k^2 / T_m}^{-1}$ can also be expressed as
\begin{align} \label{eq.bias.cv}
	\frac{\sum_{k=0}^{m-1} \Psi(\theta_k) \, \delta_{k+1}^2}{\sum_{k=0}^{m-1} \delta_{k+1}^2} 
	+ \frac{\sum_{k=0}^{m-1} \Delta M_k \, \delta_{k+1}^2}{\sum_{k=0}^{m-1} \delta_{k+1}^2}
\end{align}
for a martingale difference term $\Delta M_k \equiv \BK{\calC^{(k)}_{2,2} + \calC^{(k)}_{3,1} + \calC^{(k)}_{4,0} } - \Psi(\theta_k)$ where $\Psi(\theta_k) \equiv \CE{\calC^{(k)}_{2,2} + \calC^{(k)}_{3,1} + \calC^{(k)}_{4,0} }{\mathcal{F}_k}$ and $\BK{\calC^{(k)}_{2,2} + \calC^{(k)}_{3,1} + \calC^{(k)}_{4,0}}$ equals
\begin{align*}
	\frac{1}{8} \nabla^2 h(\theta_k) \widehat{\nabla \log \pi}(\theta_k,\calU_{k+1})^2 
	+ \frac{1}{4} \nabla^3 h(\theta_k) \widehat{\nabla \log \pi}(\theta_k,\calU_{k+1})\eta_{k+1}^2 
	\quad + \frac{1}{24} \nabla^4 h(\theta_k) \eta_{k+1}^4.
\end{align*}
Under the assumptions of Theorem \ref{thm.clt}, the function $\Psi$ satisfies the hypothesis of Theorem \ref{thm.LLN} applied to the weight sequence $\{\delta_k^2\}_{k \geq 0}$; it follows that the first term in Equation \eqref{eq.bias.cv} converge almost surely to $\mu(\phi)$. It remains to prove that the second term in Equation \eqref{eq.bias.cv} also converges almost surely to zero. By Lemma \ref{lem.MR}, it suffices to prove that the martingale 
\begin{align*}
	m \mapsto \sum_{k=0}^m \frac{\Delta M_k \, \delta_{k+1}^2}{\sum_{j=1}^{k+1} \delta_{j+1}^2}
\end{align*}
is bounded in $L^2$. Under the Assumption of Theorem \ref{thm.clt}, Lemma \ref{lem:stability} yields that the martingale difference term $\Delta M_k$ is uniformly bounded in $L^2$ from which the conclusion readily follows.
\end{proof}

For the standard choice of step-sizes $\delta_m = (m_0 + m)^{-\alpha}$ the statistical fluctuations dominate in the range $1/3 < \alpha \leq 1$, there is an exact balance between bias and fluctuations for $\alpha=1/3$, and the bias dominates for $0 < \alpha < 1/3$.  The optimal rate of convergence is obtained for $\alpha = 1/3$ and leads to an algorithm that converges at rate $m^{-1/3}$.

%
%
\section{Diffusion limit}
\label{sec.diff.lim}
In this section we show that, when observed on the right (inhomogeneous) time scale, the sample path of the SGLD algorithm converges to the continuous time Langevin diffusion of Equation \eqref{eq:overdampedLangevin}, confirming the heuristic discussion in \citet{welling2011bayesian}.

The result is based on the continuity properties of the It{\^o}'s map $\ito: \continuous([0,T], \RR^d) \to \continuous([0,T], \RR^d)$, which sends a continuous path $w \in \continuous([0,T], \RR^d)$ to the unique solution $v = \ito(w)$ of the integral equation,
\begin{equation*}
\label{eq.ito.integral.equation}
v_t = \theta_0 + 
\frac12 \, \int_{s=0}^t \, \nabla \log \pi(v_s) \, ds + 
w_t
\qquad \textrm{for all} \quad
t \in [0,T].
\end{equation*}
If the drift function $\theta \mapsto \frac12  \nabla \log \pi(\theta)$ is globally Lipschitz, then the It{\^o}'s map $\ito$ is well defined and continuous.  Further, the image $\ito(W)$ under the It\^o map of a standard Brownian motion $W$ on $[0,T]$ can be seen to be described by Langevin diffusion \eqref{eq:overdampedLangevin}.  

The approach, inspired by ideas in \citet{mattingly2012diffusion,pillai2012optimal}, is to construct a sequence of coupled Markov chains $(\theta^{(r)})_{r\ge 1}$, each started at the same initial state $\theta_0\in \RR^d$ and evolved according to the SGLD algorithm with step-sizes $\delta^{(r)}\eqdef (\delta^{(r)}_k)_{k=1}^{m(r)}$ such that $$\sum_{k=1}^{m(r)} \delta^{(r)}_k=T$$ and with increasingly fine mesh sizes
$\mesh(\delta^{(r)}) \to 0$ with
\begin{equation*}
	\mesh(\delta^{(r)})
	\eqdef
	\max \curBK{ \delta^{(r)}_k \;:\; 1 \leq k \leq m(r) }.
\end{equation*}
Define $T^{(r)}_0= 0$ and $T^{(r)}_k= \delta^{(r)}_1+\cdots+\delta^{(r)}_k$ for each $k\ge 1$.  
The Markov chains are coupled to $W$ as follows:
\begin{align} \label{eq.seq.MC}
\left\{
\begin{array}{ll}
\noise^{(r)}_{k} &= (\delta^{(r)}_k)^{-1/2} \BK{ W(T^{(r)}_k)-W(T^{(r)}_{k-1}) } \\
\theta^{(r)}_{k} &= \theta^{(r)}_{k-1} + 
\frac12 \, \delta^{(r)}_{k}  \curBK{ \nabla \log \pi(\theta^{(r)}_{k-1}) + \err(\theta^{(r)}_{k-1}, \calU^{(r)}_{k}) \, }
\,+\,
(\delta^{(r)}_{k})^{1/2} \, \noise^{(r)}_{k},
\end{array}
\right.
\end{align}
for an i.i.d.\ collection of auxiliary random variables $(\calU^{(r)}_k)_{r\ge 1, k \geq 1}$.  Note that  $(\noise^{(r)}_k)_{k\ge 1}$ form an  i.i.d.\ sequence of $\Normal(0,1)$ variables for each $r$.
We can construct piecewise affine continuous time sample paths $(S^{(r)})_{r \ge 1}$ by linearly interpolating the Markov chains,
\begin{equation} \label{eq.interpolation.S}
S^{(r)} \BK{ x T^{(r)}_{k-1} + (1-x) T^{(r)}_{k} } = 
x \, \theta^{(r)}_{k-1} + (1-x) \, \theta^{(r)}_{k},
\end{equation}
for  $x\in[0,1]$.  The approach then amounts to showing that each $S^{(r)}$ can be expressed as $\ito(\widetilde{W}^{(r)})+e^{(r)}$, where $\widetilde{W}^{(r)}$ is a sequence of stochastic processes converging to $W$ and $e^{(r)}$ is asymptotically negligible, and making use of the continuity properties of the It\^o map $\ito$.

%
%
\begin{theorem} \label{thm.diff.lim}
Let Assumption \ref{ass:Lyap} holds and suppose that the drift function $\theta \mapsto  (1/2) \nabla \, \log \pi(\theta)$ is globally Lipschitz on $\RR^d$.  If $\mesh(\delta^{(r)})\to 0$ as $r\to\infty$, then the sequence of continuous time processes $(S^{(r)})_{r\ge 1}$ defined in Equation \eqref{eq.interpolation.S} converges weakly on $\big( \continuous([0,T], \RR^d), \| \cdot \|_{\infty} \big)$ to the Langevin diffusion \eqref{eq:overdampedLangevin} started at $S_0=\theta_0$.
\end{theorem}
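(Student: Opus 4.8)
The plan is to realise each interpolated path $S^{(r)}$ as the image under the It\^o map $\ito$ of an explicit driving process, and then to show that these driving processes converge uniformly to the Brownian motion $W$. Since $\ito$ is continuous (in fact Lipschitz, by a Gronwall estimate) on $\big(\continuous([0,T],\RR^d),\norm{\cdot}_\infty\big)$ whenever $\tfrac12\drift$ is globally Lipschitz, and since $\ito(W)$ is the Langevin diffusion \eqref{eq:overdampedLangevin} started at $\theta_0$, the continuous mapping theorem then delivers the claim. Concretely, set
\[
\widetilde W^{(r)}_t \;\eqdef\; S^{(r)}_t - \theta_0 - \tfrac12 \int_0^t \drift(S^{(r)}_s)\, ds ,
\]
a continuous process with $\widetilde W^{(r)}_0 = 0$. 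By construction $S^{(r)}$ solves the integral equation defining $\ito$ with input $\widetilde W^{(r)}$, so $S^{(r)} = \ito(\widetilde W^{(r)})$ exactly: this is the decomposition $S^{(r)} = \ito(\widetilde W^{(r)}) + e^{(r)}$ announced above, with the negligible term $e^{(r)}$ taken identically zero, and $\widetilde W^{(r)}$ differs from the more elementary object --- namely $W$ plus the piecewise-linear interpolation of the stochastic-gradient martingale --- only by a quantity that vanishes in the limit. The whole proof therefore reduces to showing that $\norm{\widetilde W^{(r)} - W}_\infty \to 0$ in probability as $r \to \infty$.

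The first and main step is a moment bound uniform in $r$: for every $0 \le p \le p_H$,
\[
\sup_{r \ge 1} \; \sup_{0 \le k \le m(r)} \; \E{ V^p(\theta^{(r)}_k) } \;<\; \infty .
\]
This is obtained exactly as in the proof of Lemma \ref{lem:stability}. Assumption \ref{ass:Lyap} --- through the drift condition \eqref{eq.lyapunov.drift} combined with the growth bounds \eqref{eq.bound.H}--\eqref{eq.lyapunov.size} and the boundedness of $\nabla^2 V$ --- yields a one-step inequality $\CE{ V^p(\theta^{(r)}_k) }{ \theta^{(r)}_{k-1} } \le (1 - c\,\delta^{(r)}_k)\, V^p(\theta^{(r)}_{k-1}) + C\,\delta^{(r)}_k$ that holds as soon as $\mesh(\delta^{(r)})$ is below a threshold depending only on the constants of Assumption \ref{ass:Lyap}; iterating it and using $\sum_{k \le m(r)} \delta^{(r)}_k = T$ bounds $\E{ V^p(\theta^{(r)}_k) }$ by $V^p(\theta_0) + CT$, uniformly in $k$ and in $r$ (for the finitely many $r$ with coarse mesh the bound is finite step by step, the boundedness of $\nabla^2 V$ forcing $V \lesssim 1 + \norm{\cdot}^2$ and hence linear growth of $\drift$). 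It is exactly here that finiteness of the horizon $T$ is used; this is the only Gronwall-type ingredient, and its subtlety is that the one-step inequality must hold with constants that do not degrade as the mesh shrinks, while the bound must survive iteration over the --- unboundedly many --- steps.

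Granting this, I would control $\widetilde W^{(r)} - W$ via the identity, valid at grid points and extended piecewise-linearly,
\[
\widetilde W^{(r)}(T^{(r)}_k) - W(T^{(r)}_k)
\;=\;
\tfrac12 M^{(r)}_k \;+\; \tfrac12 \sum_{j=1}^k \int_{T^{(r)}_{j-1}}^{T^{(r)}_j} \big( \drift(\theta^{(r)}_{j-1}) - \drift(S^{(r)}_s) \big)\, ds ,
\]
where $M^{(r)}_k \eqdef \sum_{j \le k} \delta^{(r)}_j\, \err(\theta^{(r)}_{j-1},\calU^{(r)}_j)$ is a martingale and the Brownian increments have been telescoped into $W(T^{(r)}_k)$ using \eqref{eq.seq.MC}. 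Doob's $L^2$ inequality and the unbiasedness \eqref{eq.unbiased.estimate} give $\E{ \sup_{k \le m(r)} \norm{M^{(r)}_k}^2 } \lesssim \sum_j (\delta^{(r)}_j)^2\, \E{ \norm{\err(\theta^{(r)}_{j-1},\calU^{(r)}_j)}^2 } \lesssim \mesh(\delta^{(r)})\, T \sup_{r',k} \E{ V(\theta^{(r')}_k) } \to 0$. The global Lipschitz property of $\tfrac12\drift$ bounds each integrand of the second term by a constant multiple of $\norm{ S^{(r)}_s - \theta^{(r)}_{j-1} } \le \norm{ \theta^{(r)}_j - \theta^{(r)}_{j-1} }$ (the path $S^{(r)}$ being affine on the interval), so that term is $\lesssim \sum_j \delta^{(r)}_j\, \norm{ \theta^{(r)}_j - \theta^{(r)}_{j-1} }$; expanding the increments through \eqref{eq.seq.MC} and applying Cauchy--Schwarz in $j$ with weights $\delta^{(r)}_j$, together with the uniform moment bound, shows it is $\mathcal{O}\big( \mesh(\delta^{(r)})^{1/2} \big)$ in $L^1$. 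Finally, for $t$ strictly between grid points one must add an interpolation error controlled by $\max_k \norm{ \theta^{(r)}_k - \theta^{(r)}_{k-1} }$ and $\max_k \delta^{(r)}_k \norm{ \drift(\theta^{(r)}_{k-1}) }$, both of which tend to $0$ in probability because, for a suitable $q \in (2, 2 p_H]$, the quantities $\E{ \max_k (\delta^{(r)}_k)^{q/2} \norm{ \noise^{(r)}_k }^q }$, $\E{ \max_k (\delta^{(r)}_k)^q \norm{ \err(\theta^{(r)}_{k-1},\calU^{(r)}_k) }^q }$ and $\E{ \max_k (\delta^{(r)}_k)^q \norm{ \drift(\theta^{(r)}_{k-1}) }^q }$ are all $\mathcal{O}\big( \mesh(\delta^{(r)})^{q/2 - 1} \big)$ by the uniform moment bound, while the modulus of continuity of $W$ at scale $\mesh(\delta^{(r)}) \to 0$ vanishes almost surely.

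Collecting these contributions gives $\norm{ \widetilde W^{(r)} - W }_\infty \to 0$ in probability; Lipschitz continuity of $\ito$ then yields $\norm{ S^{(r)} - \ito(W) }_\infty = \norm{ \ito(\widetilde W^{(r)}) - \ito(W) }_\infty \to 0$ in probability, so $(S^{(r)})_{r \ge 1}$ converges to the Langevin diffusion $\ito(W)$ in probability --- hence weakly --- on $\big( \continuous([0,T],\RR^d), \norm{\cdot}_\infty \big)$. The main obstacle is the uniform-in-$r$ Lyapunov moment bound of the second paragraph; once that is in place, each of the error terms above is routine bookkeeping.
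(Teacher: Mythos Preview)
Your proposal is correct and follows essentially the same route as the paper: both arguments hinge on the continuity of the It\^o map under the global Lipschitz assumption, decompose the discrepancy into a stochastic-gradient martingale (controlled by Doob's inequality and the Lyapunov moment bound) plus a discretization term (controlled by the Lipschitz property of the drift), and invoke a uniform-in-$r$ version of Lemma~\ref{lem:stability}. The only cosmetic difference is that you define $\widetilde W^{(r)}$ so that $S^{(r)}=\ito(\widetilde W^{(r)})$ holds \emph{exactly} and then show $\widetilde W^{(r)}\to W$, whereas the paper takes $\widetilde W^{(r)}$ to be the piecewise-linear interpolant of $W$ and carries the same two error pieces as additive remainders $e^{(r)}_1,e^{(r)}_2$ outside the map; your packaging is arguably cleaner, and you are more explicit than the paper about why the Lyapunov bound of Lemma~\ref{lem:stability} is uniform in $r$.
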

%
%
\begin{proof}
Since the drift term $s \mapsto (1/2) \, \nabla \log \pi(s)$ is globally Lipschitz on $\RR^d$,  Lemma $3.7$ of \citep{mattingly2012diffusion} shows that the It{\^o}'s map $\ito: \continuous([0,T], \RR^d) \to \continuous([0,T], \RR^d)$ is well-defined and continuous, under the topology over the space $\continuous([0,T], \RR^d)$ induced by the supremum norm $\|w\|_{\infty} \equiv \sup \{ |w_t| : 0 \leq t \leq T \}$. By the Continuous Mapping Theorem, because the Langevin diffusion \eqref{eq:overdampedLangevin} can be seen as the image under the It{\^o}'s map $\ito$ of a standard Brownian motion on $[0,T]$ evolving in $\RR^d$,  it suffices to verify that the process $S^{(r)}$ can be expressed as $\ito(\widetilde{W}^{(r)}) + e^{(r)}$ where $\widetilde{W}^{(k)}$ is a sequence of stochastic processes that converge weakly in $\continuous([0,T], \RR^d)$ to a standard Brownian motion $W$ and $e^{(r)}$ is an error term that is asymptotically negligible in the sense that $\|e^{(r)}\|_{\infty}$ converges to zero in probability. 

For convenience, we define $\widetilde{W}^{(r)}$ as the continuous piecewise affine processes that satisfies $\widetilde{W}^{(r)}(T^{(r)}_k) = W(T^{(r)}_k)$ for all $ 0 \leq k \leq m(r)$
and that is affine in between. It follows that for any time $T^{(r)}_{k-1} \leq t \leq T^{(r)}_{k}$ we have
\begin{align*}
S^{(r)}(t)  &= S^{(r)}(T^{(r)}_{k-1})
+
\left(
\int_{T^{(r)}_{k-1}}^{t}
\frac12 \nabla \log \pi\big(S^{(r)}(T^{(r)}_{k-1}) \big)  du
+
\widetilde{W}^{(r)}(t)-\widetilde{W}(T^{(r)}_{k-1})
\right)\\
&\qquad + 
\frac{1}{2} \int_{T^{(r)}_{k-1}}^{t}
\err\big( S^{(r)}(T^{(r)}_{k-1}), \calU^{(r)}_k\big) du\\
&=
\underbrace{\theta_0
+
\left(
\int_{0}^{t}
\frac12 \nabla \log \pi\big(S^{(r)}(u) \big)  du
+
\widetilde{W}^{(r)}(t)\right)}_{\ito(\widetilde{W})(t)}\\
&\qquad
+
\underbrace{\int_{0}^{t}
\frac12 \left(
\nabla \log \pi\big(\widehat{S}^{(r)}(u) \big)
-
\nabla \log \pi\big(S^{(r)}(u) \big)
\right)
  du}_{e^{(r)}_1(t)}\\
&\qquad
+
\frac{1}{2} \underbrace{\int_{0}^{t}
\err\big( \widehat{S}^{(r)}(u), \calU^{(r)}_k\big) du}_{e^{(r)}_2(t)},
\end{align*}
where $\widehat{S}^{(r)}$ is a piecewise constant (non-continuous) process, $\widehat{S}^{(r)}(t) = S^{(r)}(T^{(r)}_{k-1})=\theta^{(r)}_{k-1}$ for $t\in[T^{(r)}_{k-1},T^{(r)}_k)$.  
The process $S^{(r)}$ can thus be expressed as the sum
$
\ito(W^{(r)}) + e^{(r)}_1 + e^{(r)}_2
$.
Since the mesh-size of the partition $\delta^{(r)}$ converges to zero as $r \to \infty$, standard properties of Brownian motions yield that $\widetilde{W}^{(r)}$ converges weakly in $\big(\continuous([0,t], \RR^d), \| \cdot \|_{\infty,[0,T]}\big)$ to $W$, a standard Brownian motion in $\RR^d$. To conclude the proof, we need to check that the quantities $\|e^{(r)}_1\|_{\infty}$ and $\|e^{(r)}_2\|_{\infty}$ converge to zero in probability. 
To prove $\E{ \|  e^{(r)}_2 \|_\infty^2 }\to 0$ in probability, we have,
\begin{align*}
\E{ \|e_2^{(r)}\|_{\infty}^2 }
&\le 4 \, \E{ \|e_2^{(r)}(T)\|^2 }
= 4 \, \sum_{k=1}^{m(r)} \big(\delta^{(r)}_k \big)^2 
\E{\err\BK{ \theta^{(r)}_{k-1}, \calU^{(r)}_k}^2} \\
&\lesssim \sum_{k=1}^{m(r)} \big(\delta^{(r)}_k \big)^2  \E{ V(\theta^{(r)}_{k-1}) }
\leq \mesh(\delta^{(r)})  \, \sum_{k=1}^{m(r)} \delta^{(r)}_k  \E{ V(\theta^{(r)}_{k-1}) } \\
&\leq \mesh(\delta^{(r)}) \times T \times \sup \curBK{ \E{ V(\theta^{(r)}_{k-1}) } \,:\, r\ge 1, 1\le k\le m(r) }
\lesssim \mesh(\delta^{(r)}).
\end{align*}
We have used Doob's martingal inequality, Assumption \ref{ass:Lyap} and Lemma \ref{lem:stability}.
Since $\mesh(\delta^{(r)})$ converges to zero, the conclusion follows.
To prove $\E{ \|e_1^{(r)}\|_{\infty}}\to 0$ in probability, we use Equation \eqref{eq.seq.MC} and note that 
since the drift function $\theta\mapsto \frac{1}{2} \nabla \log \pi(\theta)$ is globally Lipschitz, for each $T^{(r)}_{k-1} \leq u \leq T^{(r)}_k$ we have,
\begin{align*}
\; &\norm{ \nabla \log \big(\widehat{S}^{(r)}(u) \big) - \nabla \log \big(S^{(r)}(u) }
\lesssim \norm{ \theta^{(r)}_{k}-\theta^{(r)}_{k-1} }\\
&\qquad \lesssim
\|\nabla \log \pi(\theta^{(r)}_{k-1}) \| \, \delta^{(r)}_k
+
\|\err\big( \theta^{(r)}_{k-1}, \calU^{(r)}_k\big)\| \, \delta^{(r)}_k
+
\sqrt{\delta^{(r)}_k} \, \| \noise^{(r)}_k \|.
\end{align*}
It follows that 
\begin{align*}
\E{ \|e^{(r)}_1\|_{\infty} } 
\lesssim \sum_{k=1}^{m(r)} \delta^{(r)}_k  \left(
\|\nabla \log \pi(\theta^{(r)}_k) \| \, \delta^{(r)}_k
+
\|\err\big( \theta^{(r)}_k, \calU_k\big)\| \, \delta^{(r)}_k
+
\sqrt{\delta^{(r)}_k} \, \| \noise^{(r)}_k \| \right).
\end{align*}
Since $\mesh(\delta^{(r)})$ converges to zero and by Assumption \ref{ass:Lyap} and Lemma \ref{lem:stability} the suprema
\begin{align*}
\left\{
\begin{array}{ll}
\sup& \curBK{ \EE \big[ \|\nabla \log \pi(\theta^{(r)}_k)  \| \big] \,:\; r \geq 1, 1 \leq k \leq m(r) }, \\
\sup& \curBK{ \EE \big[ \| \err\big( \theta^{(r)}_k, \calU_k\big) \| \big] \,:\; r \geq 1, 1 \leq k \leq m(r) }
\end{array}
\right.
\end{align*}
are finite, it readily follows that $\|e^{(r)}_1\|_{\infty}$ converges to zero in expectation.
\end{proof}

%
%
\section{Numerical Illustrations}
\label{sec.numerics}

In this section we illustrate the use of the SGLD method to a simple Gaussian toy model and to a Bayesian logistic regression problem. We  verify that both models satisfy Assumption \ref{ass:Lyap}, the main assumption needed for our asymptotic results to hold. Simulations are then performed to empirically confirm our theory; for step-sizes sequences of the type $\delta_m=(m_0+m)^{-\alpha}$, both the rate of decay of the MSE and the impact of the sub-sampling scheme are investigated. The main purpose of this article is to establish the missing theoretical foundation of stochastic gradient methods for the approximation of expectations. For more exhaustive simulation studies we refer to   
\cite{welling2011bayesian,AhnKorWel2012,PatTeh2013a,chen2014stochastic}. By considering a logistic regression model, we demonstrate  that the SGLD can be advantageous over the Metropolis-Adjusted-Langevin (MALA) algorithm if the available computational budget only allows a few iterations through the whole data set, see Section \ref{sec:logreg}.

\subsection{Linear Gaussian model}
\label{ex:simpleGaussian}
Consider $N$ independent and identically distributed observations $(x_i)_{i=1}^N$ from the two parameters location model given by
\begin{eqnarray*}
x_{i} \mid \theta \; \sim \; \Normal(\theta,\sigma_{x}^{2}).
\end{eqnarray*}
We use a Gaussian prior $\theta \sim \Normal(0,\sigma_{\theta}^{2})$ and assume that the variance hyper-parameters $\sigma_\theta^2$ and $\sigma_x^2$ are both known.
The posterior density $\pi(\theta)$ is normally distributed with mean $\mu_p$ and variance $\sigma_p^2$ given by
\begin{equation*}
\mu_p = \bar{x} \, \Big(1+\frac{\sigma_x^2}{N \sigma_{\theta}^2} \Big)^{-1}
\qquad \textrm{and} \qquad
\sigma_p^2= \frac{\sigma_x^2}{N} \, \Big(1+\frac{\sigma_x^2}{N \sigma_{\theta}^2} \Big)^{-1}
\end{equation*}
where $\bar{x} = (x_1 + \ldots + x_N)/N$ is the sample average of the observations. In this case, we have
\begin{align*}
\nabla \log \pi(\theta) 
&= -\frac{\theta - \mu_p}{\sigma_p^2}
\quad \textrm{and} \quad 
\err(\theta,\calU)
=
\Big\{ (N/n) \sum_{j \in \mathcal{I}_n(\calU)} x_j
-  \sum_{1 \leq i \leq N} x_i \Big\} / \sigma_x^2
\end{align*}
for a random subset $\mathcal{I}_n(\calU) \subset [N]$ of cardinal $n$.

\subsubsection{Verification of Assumption \ref{ass:Lyap}}
We verify in this section that Assumption \eqref{ass:Lyap} is satisfied for the following choice of Lyapunov function,
\begin{equation*}
V(\theta)=1+\frac{(\theta-\mu_p)^2}{2 \, \sigma_p^2}.
\end{equation*}
Since the error term $H(\theta, \calU)$ is globally bounded, the drift $(1/2) \nabla \log \pi$ and the Lyapunov function $V$ are linear,  Assumptions \eqref{ass:Lyap}.1 and \eqref{ass:Lyap}.2 are satisfied.
Finally, to verify Assumption \eqref{ass:Lyap}.3, it suffices to note that since $\nabla \log \pi(\theta) = -(\theta - \mu_p)/\sigma_p^2$ we have
\begin{align*}
\angleBK{\nabla V(\theta), \frac12 \, \nabla \log \pi(\theta)} 
&= -\frac{(\theta - \mu_p)^2}{2 \, \sigma_p^4} = \frac{1-V(\theta)}{\sigma_p^2}.
\end{align*}
In other words, Assumption \eqref{ass:Lyap}.3 holds with $\alpha=\beta=1/\sigma_p^2$.

\subsubsection{Simulations}\label{sec:SimGauss}

\begin{figure}
\begin{center}
\includegraphics[width=0.95\textwidth]{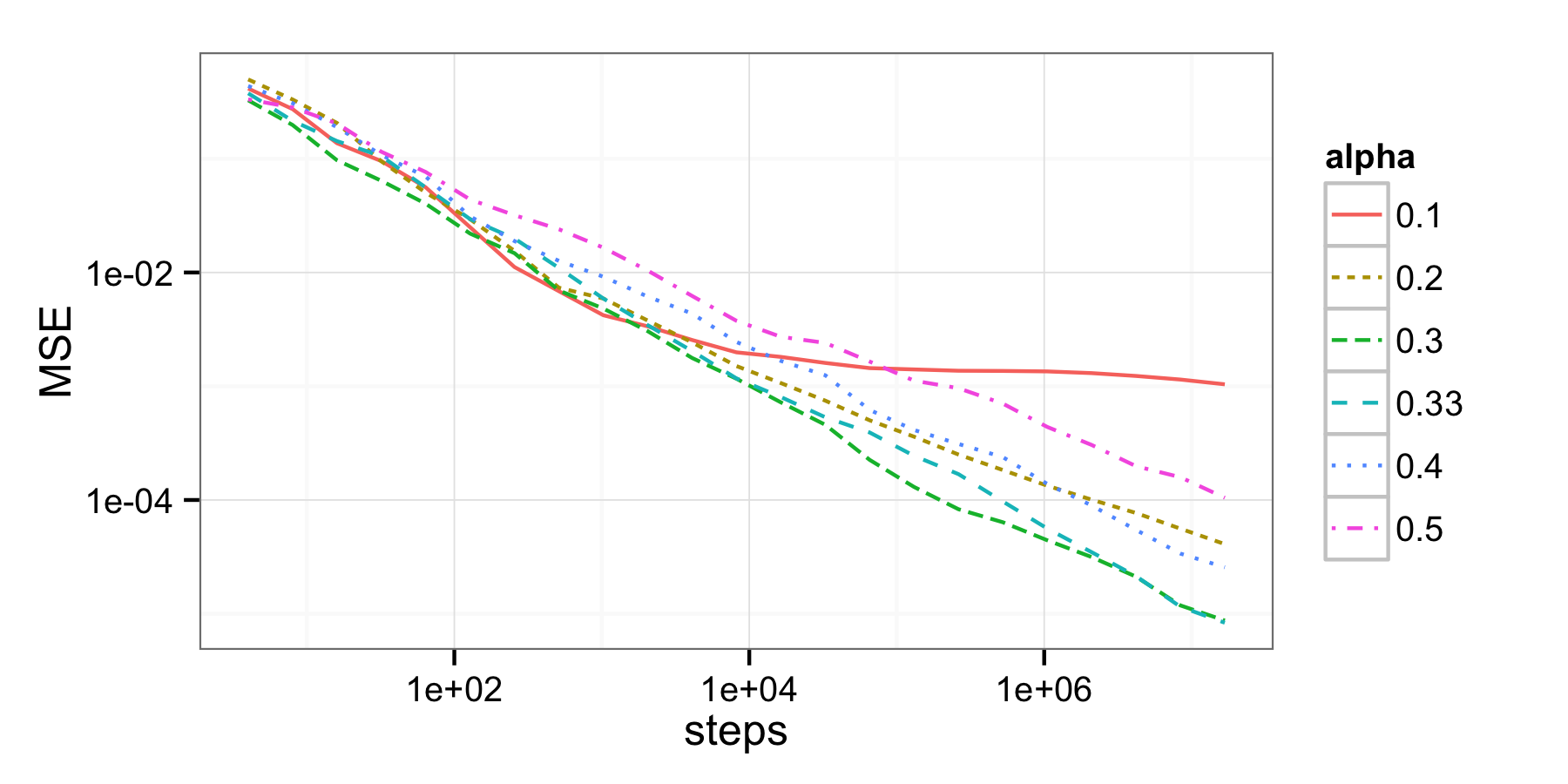}
\end{center}
\caption{Decay of the MSE for step sizes  $\delta_m \asymp m^{-\alpha}$, $\alpha\in \{0.1,0.2,0.3,0.33,0.4,0.5 \}$. The MSE decays algebraically for all step sizes, with fastest decay at approximately $\alpha=0.33$.\\}
\label{fig:MSEdecay}
\end{figure}

\begin{figure}[!h]
\begin{center}
\includegraphics[width=0.80\textwidth]{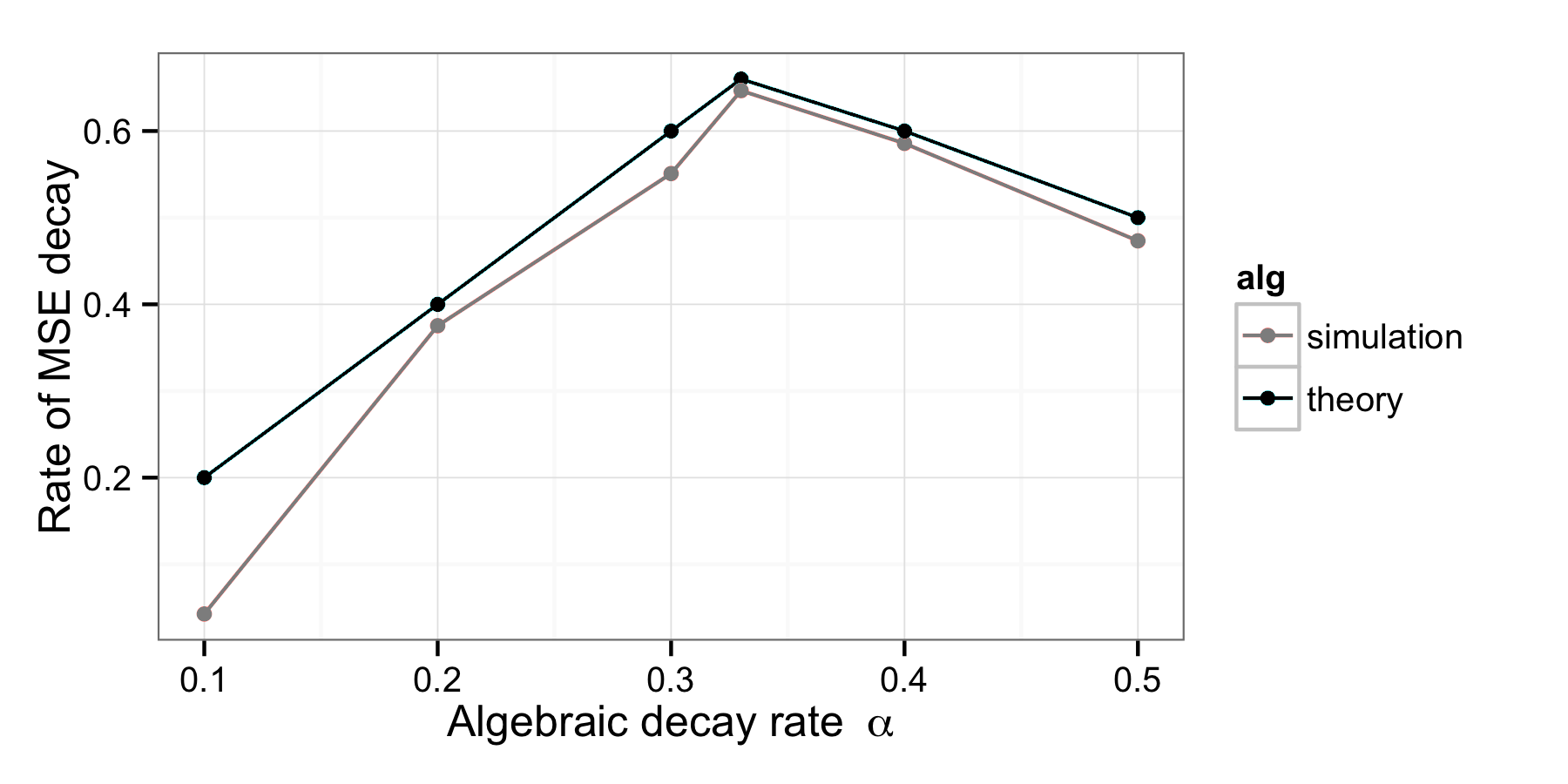}\hspace*{-6em}
\end{center}
\caption{Rates of decay of the MSE obtained from estimating the asymptotic slopes of the plots in Figure \ref{fig:MSEdecay}, compared to theoretical findings of Theorem \ref{thm.clt}.  The fastest convergence rate is achieved at $\alpha=1/3$.  }
\label{fig:MSErate}
\end{figure}
\begin{figure}[!h]
\begin{center}
\includegraphics[width=0.95\textwidth]{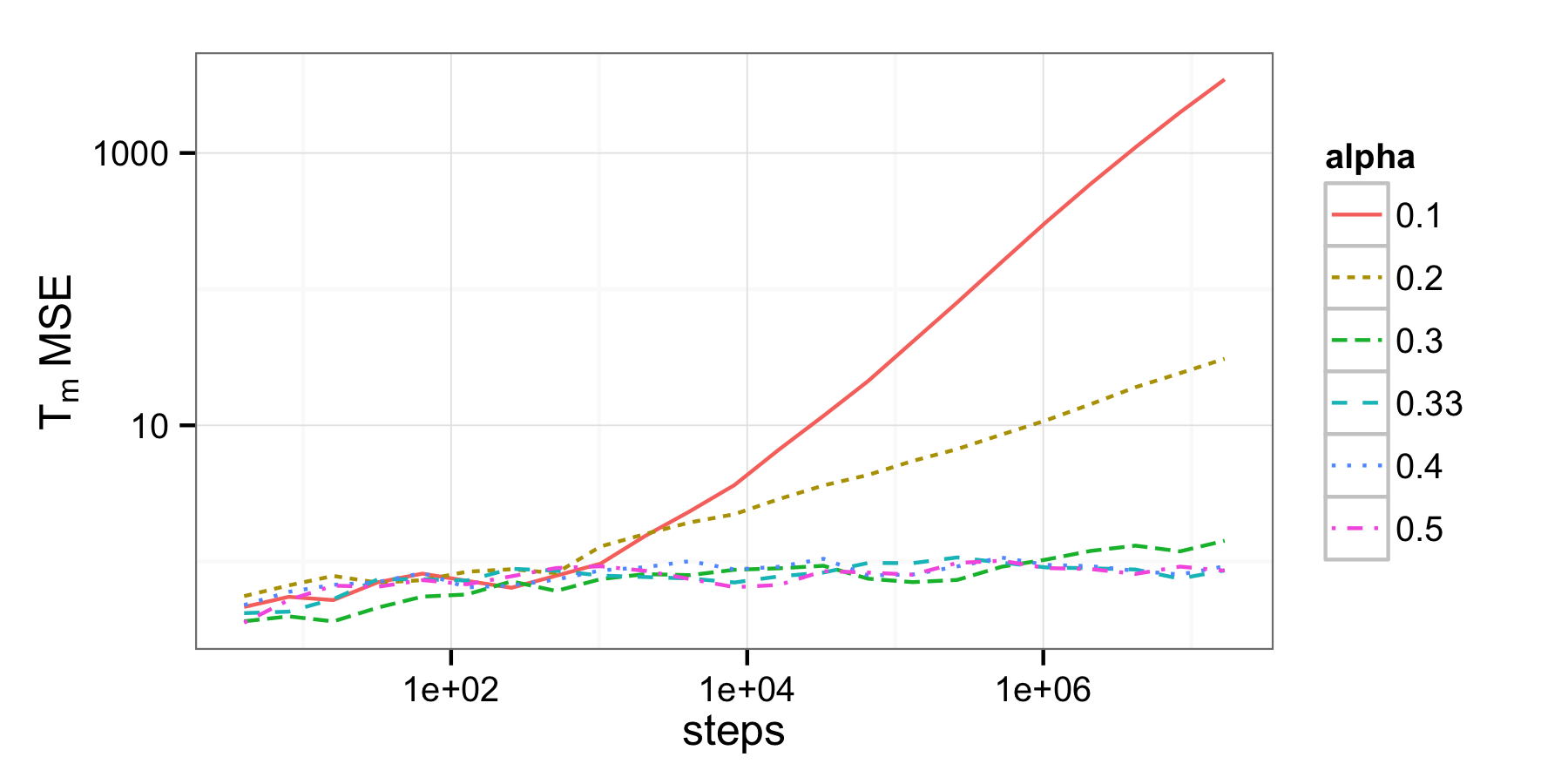}
\end{center}
\caption{Plots of the MSE multiplied by $T_m$ against the number of steps $m$.  The plots are flat for $\alpha\ge 0.33$, demonstrating that the MSE scales as $T_m^{-1}$ in this regime, while the plots diverge for $\alpha<0.33$, demonstrating that it decays at a slower rate here. }
\label{fig:MSEscaling}
\end{figure}

\begin{figure}
\begin{center}
\includegraphics[width=0.95\textwidth]{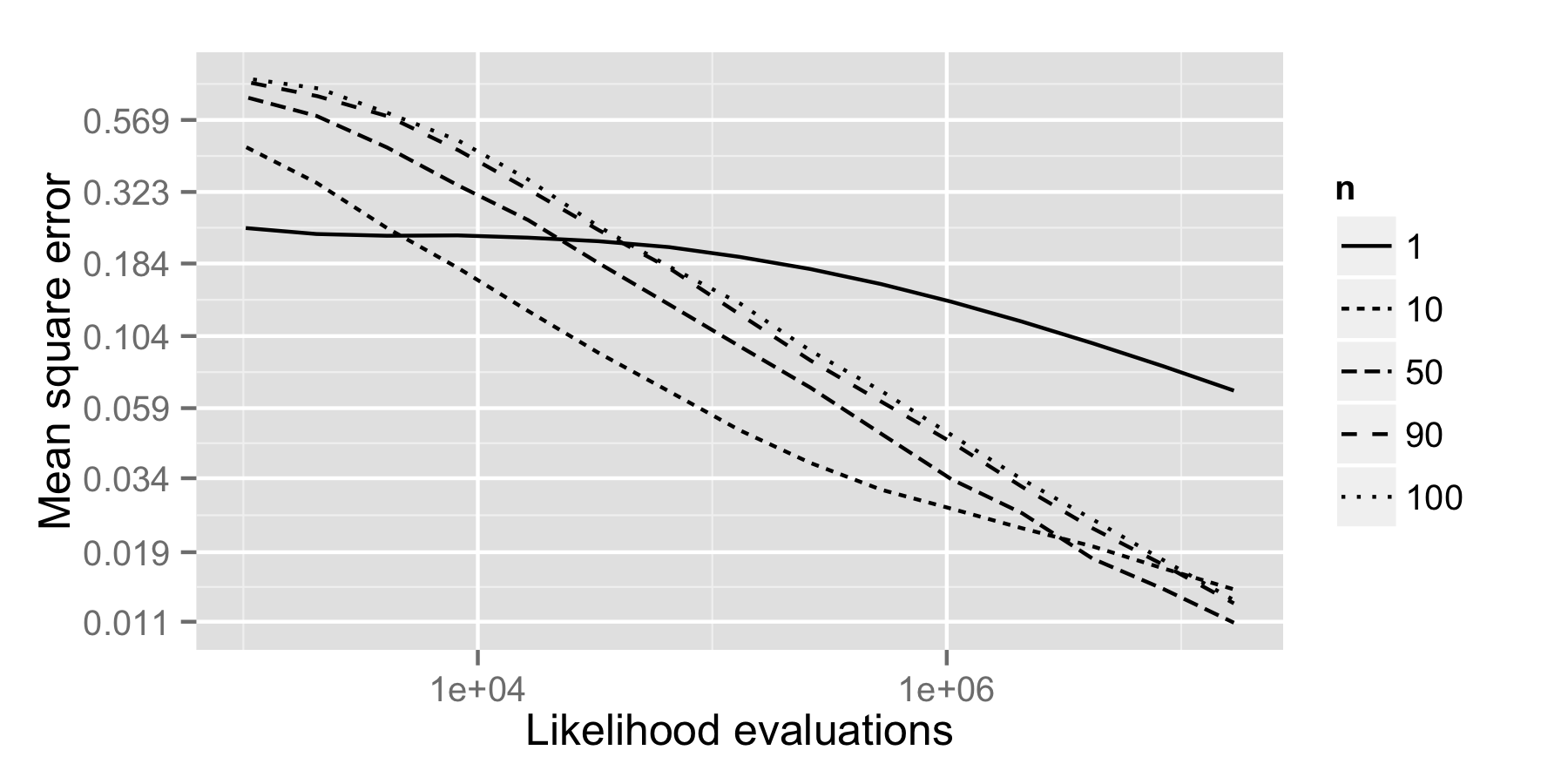}
\end{center}
\caption{Behaviour of the mean squared error for different subsample sizes $n$.\\
}
\label{fig:NonAsympMSE}
\end{figure}

We chose $\sigma_\theta=1$, $\sigma_\data=5$ and created a data set consisting of $\nData=100$ data points simulated from the model. We used $n=10$ as the  size of subsets used to estimate the gradients.  We evaluated the convergence behaviour of SGLD using the test function $\A\varphi$ where  $\phi = \sin\left(x-\mu_p -0.5\sigma_p \right)$.

We are interested in confirming the asymptotic convergence regimes of Theorem \ref{thm.clt} by running SGLD with a range of step sizes, and plotting the mean squared error (MSE) achieved by the estimate $\pi_m(\A\varphi)$ against the number of steps $m$ of the algorithm to determine the rates of convergence.  We used step sizes $\delta_m=(m+m_0(\alpha))^{-\alpha}$, for $\alpha\in\{0.1,0.2,0.3,0.33,0.4,0.5\}$  where $\time_0(\alpha)$ is chosen such  that $\delta_1$ is less than the posterior standard deviation.  According to the Theorem, the MSE should scale as $T_m^{-1}$ for $\alpha> 1/3$, and $\sum_{k=1}^m \delta_k^2/T_m$ for $\alpha\le 1/3$.

The observed MSE is plotted against $m$ on a log-log plot in Figure \ref{fig:MSEdecay}. As predicted by the theory, the optimal rate of decay is around $\alpha_\star = 1/3$. To be more precise, we estimate the rates of decay by estimating the slopes on the log-log plots.  This is plotted in Figure \ref{fig:MSErate}, which also shows a good match to the theoretical rates given in Theorem \ref{thm.clt}, where the best rate of decay is $2/3$ achieved at $\alpha=1/3$.  Finally, to demonstrate that there are indeed two distinct regimes of convergence, in Figure \ref{fig:MSEscaling} we have plotted the MSE multiplied by $T_m$.  For $\alpha>1/3$, the plots remain flat, showing that the MSE does indeed decay as $T_m^{-1}$.  For $\alpha<1/3$, the plots diverge, showing that the MSE decays at a slower rate than $T_m^{-1}$.

%

%
For $\alpha=0.33$, Figure \ref{fig:NonAsympMSE} depicts 
  how the MSE decreases as a function of the number of likelihood evaluations for subsample sizes $\nSubData=1,5,10,50,100$. 
  

\subsection{Logistic Regression}
We verify in this section that Assumption \eqref{ass:Lyap} is satisfied for the following logistic regression model. Consider $N$ independent and identically observations $(y_i)_{i=1}^N$ distributed as
\begin{equation}
\label{eq.logistic}
\mathbb{P}(y_{i} = 1 \mid x_{i},\theta) \; = \; 1 - \mathbb{P}(y_{i} = -1 \mid x_{i},\theta)\; = \;\textrm{logit}\big(\angleBK{\theta, x_{i}} \big)
\end{equation}
for covariate $x_i \in \RR^d$, unknown parameter $\theta \in \RR^d$ and function $\textrm{logit}(z)=e^z / (1+e^z)$. 
We assume a centred Gaussian prior on $\theta \in \RR^d$ with positive definite symmetric covariance matrix $C \in \RR^{d \times d}$. It follows that
\begin{align*}
\nabla \log \pi(\theta)
&=
-C^{-1}\theta + \sum_{i=1}^{N}\text{logit} \big(-y_{i} \angleBK{\theta,x_{i}} \big) \, y_{i} \, x_{i}\\
\err(\theta,\calU)
&=
(N/n)\sum_{j \in \mathcal{I}_n(U)} \text{logit} \big(-y_{j} \angleBK{ \theta,x_{j}} \big) \, y_{j} \, x_{j}
-
\sum_{1 \leq i \leq N}\text{logit} \big(-y_{i} \angleBK{\theta,x_{i}} \big) \, y_{i} \, x_{i}
\end{align*}
for a random subset $\mathcal{I}_n(\calU) \subset [N]$ of cardinal $n$.

\subsubsection{Verification of Assumption \ref{ass:Lyap}}
We verify in this section that Assumption \eqref{ass:Lyap} is satisfied for the Lyapunov function $V(\theta)=1+\| \theta \|^2$.
Since $\err(\theta, \calU)$ is globally bounded and $\|\nabla V(\theta)\|^{2} = \|\theta\|^2$ and
\begin{align*}
\|\nabla \log \pi(\theta) \|^2 \lesssim 1 + \|C^{-1} \theta\|^2 \lesssim 1 + \|\theta\|^2 = V(\theta),
\end{align*}
it is straightforward to see that Assumption \eqref{ass:Lyap}.1 and \eqref{ass:Lyap}.2 are satisfied. Finally,
\begin{align*}
\angleBK{\nabla V(\theta), \frac12 \, \nabla \log \pi(\theta)}
&=
-\frac12 \angleBK{\theta, C^{-1} \theta} + \frac12 \, \sum_{i=1}^{N}\text{logit} \big(-y_{i} \angleBK{\theta,x_{i}} \big) \, y_{i} \, \angleBK{\theta, x_{i}}\\
&\leq
-\frac{\lambda_{\min}}{2} \|\theta\|^2 + \frac{\sum_{i=1}^N \|x_i\|}{2} \, \|\theta\|
\leq -\frac{\lambda_{\min}}{4} V(\theta) + \beta
\end{align*}
with $\lambda_{\min} > 0$ the smallest eigenvalue of $C^{-1}$ and $\beta \in (0,\infty)$ the global maximum over $\theta \in \RR^d$ of the function $\theta \mapsto -\frac{\lambda_{\min}}{4} \|\theta\|^2 + \frac{\sum_{i=1}^N \|x_i\|}{2} \, \|\theta\|$.

\subsubsection{Comparison of the SGLD and the MALA for logistic regression}\label{sec:logreg}We consider a simulated dataset where $d=3$ and $N=1000$.  We set the input covariates 
$
x_i=(x_{i,1},x_{i,2},1)
$
with $x_{i,1},x_{i,2}\iid \Normal (0,1)$ for $i=1\dots N$, and use a Gaussian prior $\theta\sim \Normal(0,I)$.  We draw a $\theta_0\sim\Normal(0,I)$ and based on it we generate $y_i$ according to the model probabilities \eqref{eq.logistic}. In the following we compare MALA in SGLD by comparing their estimate for the variance of the first component.

The findings of this article show that SGLD-based expectation estimates converge at a slower rate of at most $n^{-\frac{1}{3}}$ compared to the standard rate of $n^{-\frac{1}{2}}$  for standard MCMC algorithms such as the MALA algorithm. In the following we demonstrate that in the non-asymptotic regime (allowing only a few passes through the data set) the SGLD can be advantageous.  We start both algorithms at the MAP estimator and we ensure that this study is not biased due to different speeds in finding the mode of the posterior. For a fair comparison we tune the MALA to an acceptance rate of approximately $0.564$ following the findings of \cite{Roberts1998OptimalSMala}. For the SGLD-based variance estimate of the first component for $n=30$  we choose $\delta_m=(a\cdot m+b)^{-0.38}$ as step sizes and optimise over the choices of $a$ and $b$.  This is achieved by estimating the MSE for choices of $a$ and $b$ on a log-scale grid based on $512$ independent runs.  The estimates based on $20$ and $1000$ effective iterations through the data set the averages are visualised in the heat maps in Figure \ref{fig:heatmap}. That means we limit the algorithm to $200$ and $1000000$ likelihood evaluations, respectively. The figures indicate that the range of the good parameter choices seems to be the same in both cases. Using the heat map for the estimated MSE after 20 iterations through the data set, we pick  $a=5.89\cdot 10^7$ and $b=7.90\cdot 10^8$ and compare the time behaviour of the SGLD and the MALA algorithm in Figure \ref{fig:LRtrans}. The figure is a simulation evidence that the SGLD algorithm can be advantageous in the initial phase for the first few iterations through the data set. This recommends further investigation as the initial phase can be quite different from the asymptotic phase.
\newcommand{\exedout}{%
  \rule{0.8\textwidth}{0.5\textwidth}%
}
\begin{figure}
\centering
\begin{minipage}{0.50\textwidth}
\centering
\includegraphics[width=1\textwidth]{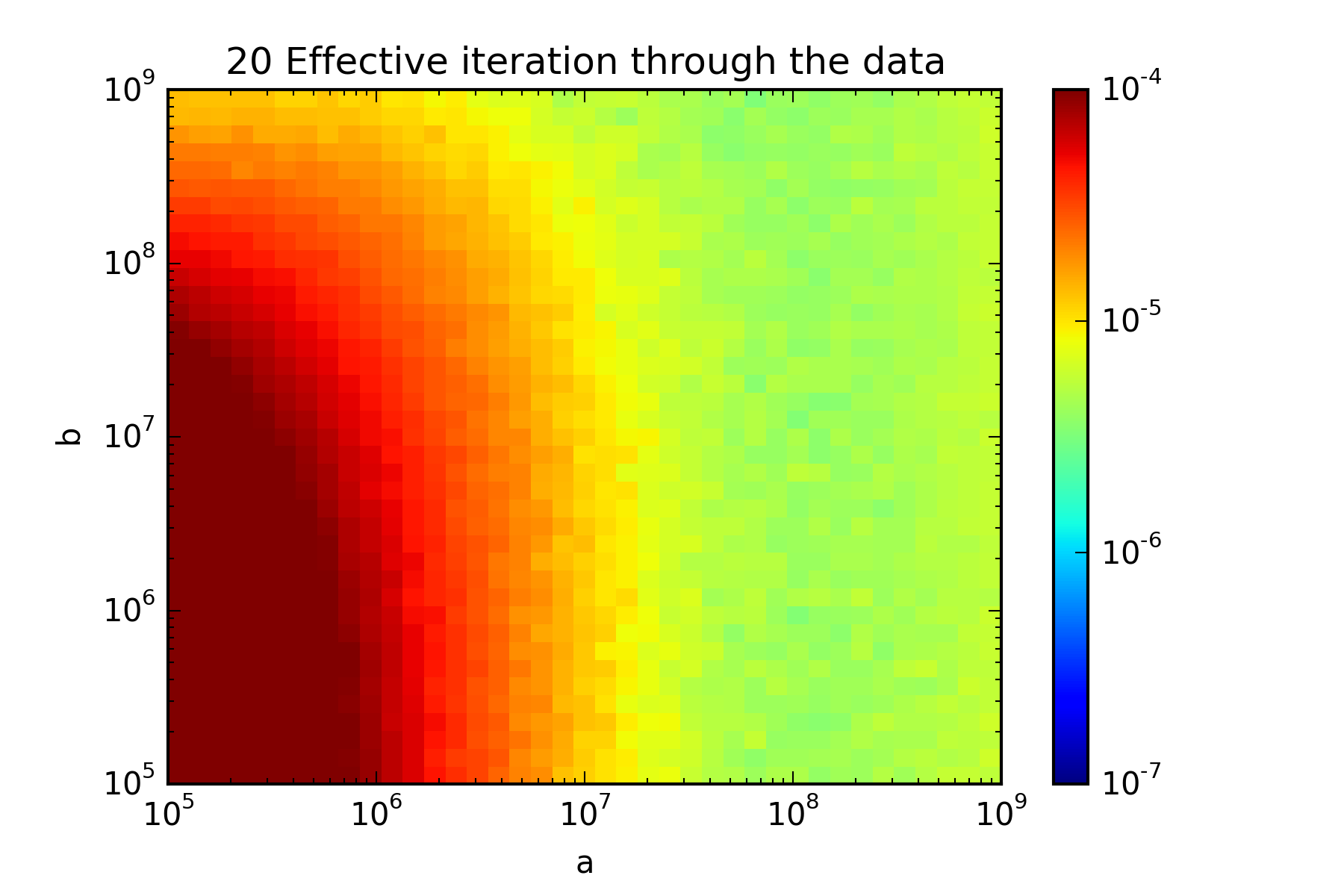}
\end{minipage}\hfill
\begin{minipage}{0.50\textwidth}
\centering
\includegraphics[width=1\textwidth]{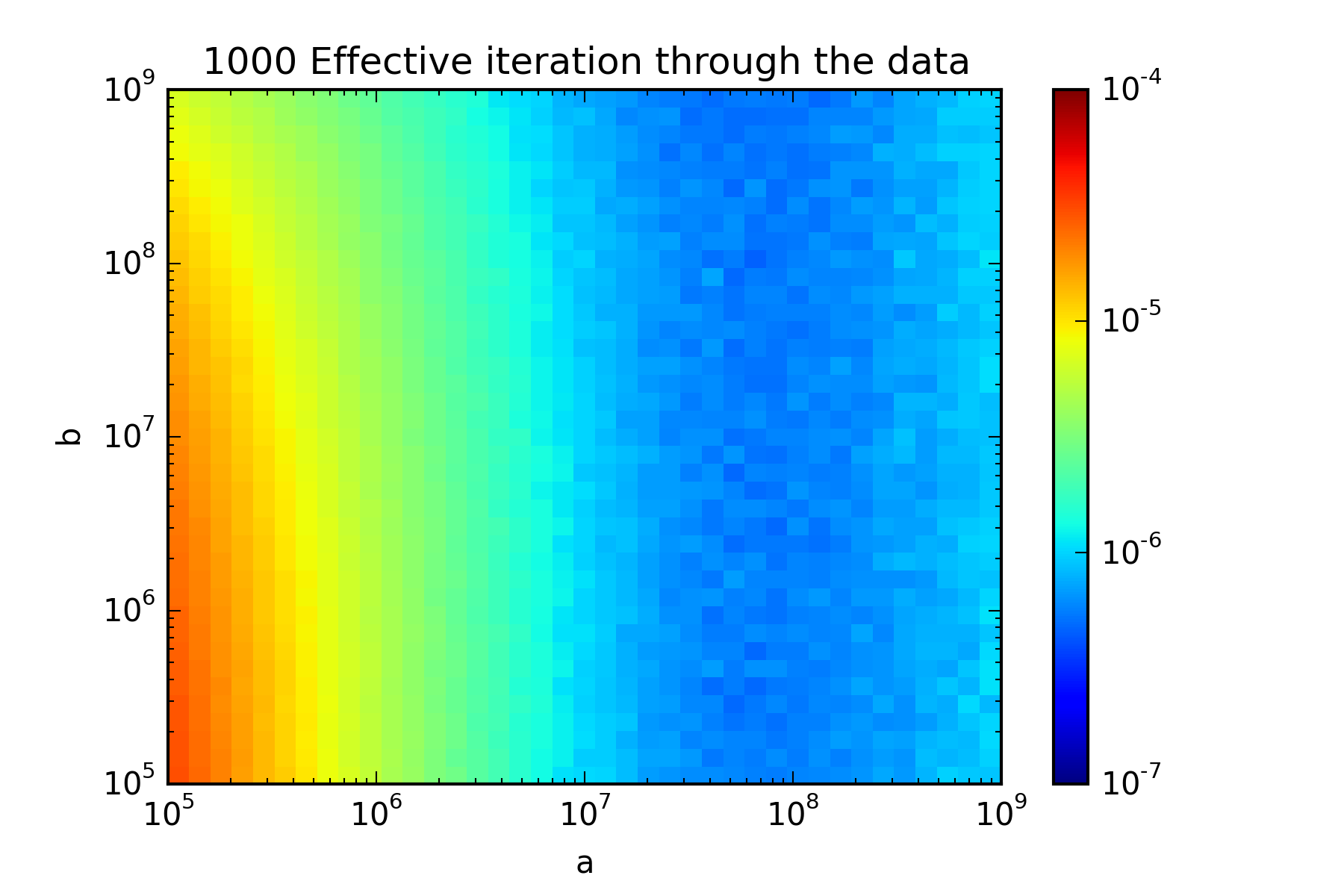}
\end{minipage}
\caption{\label{fig:heatmap}Expected MSE of the SGLD-based estimate variance estimate of the first component for $n=30$ and step sizes $\delta_m=(a\cdot m+b)^{-0.38}$ after 20 and 1000 iterations through the data set}
\end{figure}

\begin{figure}
\centering
\includegraphics[width=0.7\textwidth]{./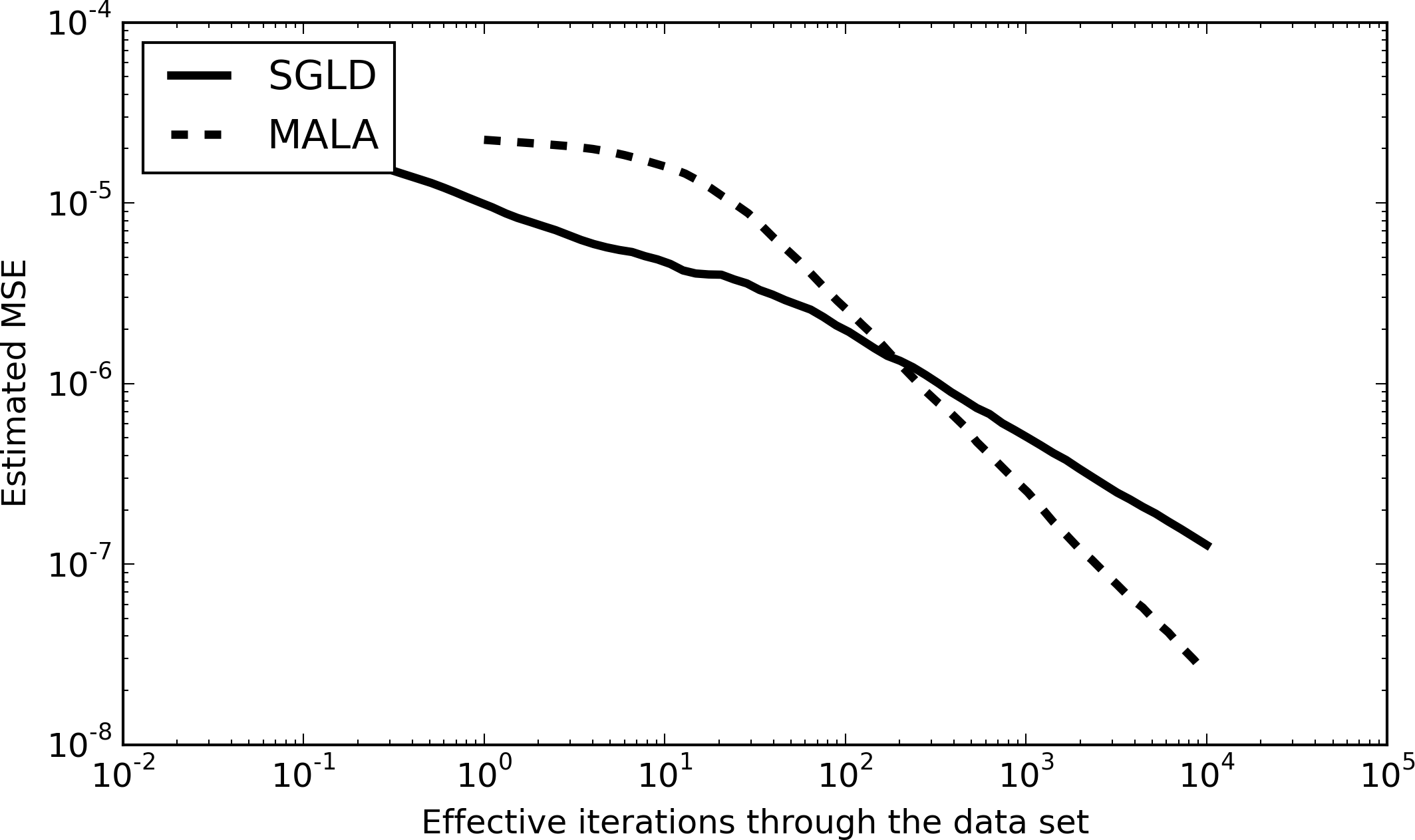}
\caption{\label{fig:LRtrans}Behaviour of the MSE of estimating the posterior variance of the first component for 3-dimensional logistic regression of MALA and SGLD with tuned parameters}
\end{figure}
 
%
%
\section{Conclusion}
So far, the research on the SGLD algorithm has mainly been focused on extending the methodology.  In particular, a parallel version has been introduced in \citet{ahn2014distribuetd} and it has been adapted to natural gradients in \citet{patterson2013stochastic}.
This research has been accompanied by promising simulations. 
In contrast, we have focused in this article on providing rigorous mathematical foundations for the SGLD algorithm by showing that the step-size weighted estimator $\pi_m(f)$ is consistent, satisfies a central limit theorem and its asymptotic bias-variance decomposition can be characterised by an explicit functional $\biasvariance_m$ of the step-sizes sequence $(\delta_m)_{m \geq 0}$. 
The consistency of the algorithm is mainly due to the decreasing step-sizes procedure that asymptotically removes the bias from the discretization and ultimately mitigates the use of an unbiased estimate of the gradient instead of the exact value.
Additionally, we have proved a diffusion limit result that establishes that, when observed on the right (inhomogeneous) time scale, the sample paths of the SGLD can be approximated by a Langevin diffusion.

The CLT and bias-variance decomposition can be leveraged to show that it is optimal to choose a step-sizes sequences $(\delta_m)_{m \geq 0}$ that scales as $\delta_m \asymp m^{-1/3}$; the resulting algorithm converges at rate $m^{-1/3}$.
Note that this recommendation is different from the previously suggested \citet{welling2011bayesian} choice of $\delta_m \asymp m^{-1/2}$.  

Our theory suggests that an optimally tuned SGLD method converges at rate $\mathcal{O}(m^{-1/3})$, and is thus asymptotically less efficient than a standard MCMC procedure.  We believe that this result does not necessarily preclude SGLD to be more efficient in the initial transient phase, a result hinted at in Figure \ref{fig:NonAsympMSE};  
the detailed study of this (non-asymptotic) phenomenon is an interesting venue of research.
The asymptotic convergence rate of SGLD depends crucially on the decreasing step sizes, which is required to reduce the effect of the discretization bias due to the lack of a Metropolis-Hastings correction.  Another avenue of exploration is to determine more precisely the bias resulting from the discretization of the Langevin diffusion, and to study the effect of the choice of step sizes in terms of the trade-off between bias, variance, and computation.

\appendix

%
%
\section{Proof of Lemma \ref{lem.MR}} 
\label{sec.proof.lem.MR}
Recall Kronecker's Lemma \citep[Lemma \textrm{IV}.3.2]{shiryaev1996probability} that states that for a non-decreasing and positive sequence $b_m \to \infty$ and another real valued sequence $(a_m)_{m \geq 0}$ such that the series $\sum_{m \geq 0} a_m / b_m$ converges the following limit holds,
\begin{equation*}
\lim_{m \to \infty} \; \frac{\sum_{k=0}^{m} \, a_{k}}{b_m} \; = \; 0.
\end{equation*}
For proving Equation \eqref{eq.martingale.type} it thus suffices to show that the sums $\sum_{k \geq 0} \abs{ \Delta M_k } / T_k$ and $\sum_{k \geq 0} \abs{ X_k } / T_k$ are almost surely finite. This follows from Condition \eqref{eq.martingale.type.M.part} ($L^2$ martingale convergence theorem) and Condition \eqref{eq.martingale.type.R.part}.

%
%
\section{Proof of Lemma \ref{lem:stability}} 
\label{sec.proof.lem.stability}
For clarity, the proof is only presented in the scalar case $d=1$; the multidimensional setting is entirely similar.
%
%
Before embarking on the proof, let us first mention some consequences of Assumptions \ref{ass:Lyap} that will be repeatedly used in the sequel.
Since the second derivative $V^{''}$ is globally bounded and $(V')^2$ is upper bounded by a multiple of $V$, we have that
\begin{equation} \label{eq.bound.derivative.Vp}
\big| (V^p)^{''}(\theta) \big| \lesssim V^{p-1}(\theta)
\end{equation}
and that the function $V^{1/2}$ is globally Lipschitz. By expressing the quantity $V^p(\theta+\epsilon)$ as $\big(V^{1/2}(\theta) + [V^{1/2}(\theta+\epsilon)-V^{1/2}(\theta)] \big)^{2p}$, it then follows that
\begin{equation} \label{eq.bound.difference.Vp}
V^p(\theta+\epsilon) \lesssim V^p(\theta) + |\epsilon|^{2p}.
\end{equation}
Similarly, Definition \eqref{eq.sgld}, the bound $\| \nabla \log \pp(\theta)  \|^2 \lesssim V(\theta)$ and Equation \eqref{eq.bound.H} yield that for any exponent $0 \leq p \leq p_H$ the following holds,
\begin{equation} \label{eq.bound.delta.theta}
\EE_m[ \, |\theta_{m+1} - \theta_m|^{2p} \,] 
\lesssim 
\delta^{2p}_{m+1} \, V^{p}(\theta) + \delta_{m+1}^{p}.
\end{equation}
For clarity, the proof of Lemma \eqref{lem:stability} is separated into several steps. First, we establish that the process $m \mapsto V^p(\theta_m)$ satisfies a Lyapunov type condition; see Equation \eqref{eq.discr.lyap} below. We then describe how Equation \eqref{eq.stability.estimate} follows from this Lyapunov condition. The fact that $\pi(V^p)$ is finite can be seen as a consequence of Theorem $2.2$ of \citep{roberts1996exponential}.

\begin{itemize}
\item 
{\bf Discrete Lyapunov condition.}\\
\noindent
Let us prove that there exists an index $m_0 \geq 0$ and constants $\alpha_p, \beta_p >0$ such that for any $m \geq m_0$ we have
\begin{equation} \label{eq.discr.lyap}
\EE_m\big[ V^p(\theta_{m+1}) - V^p(\theta_{m}) \big] / \delta_{m+1}
\; \leq \;
- \alpha_p \, V^p(\theta_m) +\beta_p.
\end{equation}
Since for any $\epsilon$ there exists $C_\epsilon$ such that $V^{p-1}(\theta) \leq C_\epsilon + \, \epsilon V^p(\theta)$, for proving \eqref{eq.discr.lyap} it actually suffices to verify that we have
\begin{equation} \label{eq.discr.lyap.weak}
\EE_m\big[ V^p(\theta_{m+1}) - V^p(\theta_{m}) \big] / \delta_{m+1}
\; \leq \;
- \widetilde{\alpha}_p \, V^p(\theta_m) + \widetilde{\beta}_p \, V^{p-1}(\theta_m)
\end{equation}
for some constants $\widetilde{\alpha_p}, \widetilde{\beta_p} > 0$ and index $m \geq 1$ large enough. A second order Taylor expansion yields that the left hand side of \eqref{eq.discr.lyap.weak} is less than
\begin{equation} \label{eq.discr.lyap.2}
\EE_m\big[ (V^p)'(\theta_m) \, (\theta_{m+1}-\theta_{m}) \big] / \delta_{m+1}
+
\frac12 \, 
\EE_m\big[(V^p)^{''}(\xi) \, (\theta_{m+1}-\theta_m)^2 \big] / \delta_{m+1}
\end{equation}
for a random quantity $\xi$ lying between $\theta_m$ and $\theta_{m+1}$. Since $\EE_m[\theta_{m+1}-\theta_m] = \frac12 \, \nabla \log \pp(\theta_m)$, the drift condition \eqref{eq.lyapunov.drift} yields that the first term of \eqref{eq.discr.lyap.2} is less than
\begin{equation} \label{eq.first.term}
p \, V^{p-1}(\theta_m) \, \BK{ -\alpha \, V(\theta_m) + \beta }
\end{equation}
for $\alpha,\beta>0$ given by Equation \eqref{eq.lyapunov.drift}.
Consequently, for proving Equation \eqref{eq.discr.lyap}, it remains to bound the second term of \eqref{eq.discr.lyap.2}. Equation \eqref{eq.bound.derivative.Vp} shows that $|(V^p)^{''}(\xi)|$ is upper bounded by a multiple of $|V^{p-1}(\xi)|$; the bound \eqref{eq.bound.difference.Vp} then yields that $|V^{p-1}(\xi)|$ is less than a constant multiple of $|V^{p-1}(\theta_m)| + |\theta_{m+1}-\theta_m|^{2(p-1)}$. It follows from the bound \eqref{eq.bound.delta.theta} on the difference $(\theta_{m+1} - \theta_m)$ and the assumption $\EE[ \, \|H(\theta, \calU) \|^{2p_H}\,] \lesssim V^{p_H}(\theta)$ that for any $\epsilon > 0$ one can find an index $m_0 \geq 1$ large enough such that for any index $m \geq m_0$ the second term of \eqref{eq.discr.lyap.weak} is less than a constant multiple of 
\begin{equation} \label{eq.second.term}
\epsilon \, V^p(\theta_m) + \beta_{p,\epsilon} \, V^{p-1}(\theta)
\end{equation}
for a constant $\beta_{p, \epsilon} > 0$. Equations \eqref{eq.first.term} and \eqref{eq.second.term} directly yield to Equation \eqref{eq.discr.lyap.weak}, which in turn implies to Equation \eqref{eq.discr.lyap}.

\item
{\bf Proof that $\sup_{m \geq 1} \; \EE[V^p(\theta_m)] \; < \; \infty$ for any $p \leq p_H$.}\\
\noindent
Equations \eqref{eq.bound.difference.Vp} and \eqref{eq.bound.delta.theta} show that if $\EE[V^p(\theta_m)]$ is finite then so is $\EE[V^p(\theta_{m+1})]$. Under the conditions of Lemma \ref{lem:stability}, this shows that $\EE[V^p(\theta_m)]$ is finite for any $m \geq 0$. An inductive argument based on the discrete Lyapunov Equation \eqref{eq.discr.lyap} then yields that for any index $m \geq m_0$ the expectation $\EE[V^p(\theta_m)]$ is less than
\begin{equation}
\max\Big( \beta_p / \alpha_p,\max \big\{ \EE[V^p(\theta_m)]: \; \; 0 \leq m \leq m_0 \big\} \Big). \label{eq:stabilityIndepDelta}
\end{equation}
It follows that $\sup_{m \geq 1} \; \EE[V^p(\theta_m)]$ is finite.
\item
{\bf Proof that $\sup_{m \geq 1} \; \pi_m(V^p)\; < \; \infty$ for any $p \leq p_H/2$.}\\
\noindent
One needs to prove that the sequence $(1/T_m) \sum_{k=m_0}^m \delta_{k+1} V^p(\theta_k)$ is almost surely bounded. The discrete Lyapunov Equation \eqref{eq.discr.lyap} yields that $\delta_{k+1} V^p(\theta_k)$ is less than $\delta_{k+1} \, \beta_p / \alpha_p -\EE_k[ V^p(\theta_{k+1}) - V^p(\theta_k)] / \alpha_p$; this yields that $(1/T_m) \sum_{k=m_0}^m \delta_{k+1} V^p(\theta_k)$ is less than a constant multiple of
\begin{equation*}
1 + \frac{V^p(\theta_{m_0})}{T_m} + \frac{1}{T_m} \sum_{k=m_0}^{m} \Big\{ V^p(\theta_{k+1}) - \EE_{k}[V^p(\theta_{k+1})] \Big\}.
\end{equation*}
To conclude the proof, we prove that the last term in the above displayed Equation almost surely converges to zero; by Lemma \ref{lem.MR}, it suffices to prove that the quantity 
\begin{equation} \label{eq.martingale.a.s.bounded}
\sum_{k \geq m_0}
\E{ \abs{ \frac{V^p(\theta_{k+1}) - \EE_{k}[V^p(\theta_{k+1})]}{T_k} }^2 }
\end{equation}
is almost surely finite. 
We have $\E{ |V^p(\theta_{k+1}) - \EE_{k}[V^p(\theta_{k+1})]|^2 } \leq 2 \times \EE[| \, V^p(\theta_{k+1})-V^p(\theta_{k}) \, |^2]$ and the mean value theorem yields that $| V^p(\theta_{k+1})-V^p(\theta_{k}) | \lesssim V^{p-1}(\xi) \, V'(\xi) \, (\theta_{k+1}-\theta_{k})$ for some $\xi$ lying between $\theta_k$ and $\theta_{k+1}$. The bound $|V'(\theta)| \lesssim V^{1/2}(\theta)$ and Equation \eqref{eq.bound.difference.Vp} then yield that 
$| V^p(\theta_{k+1})-V^p(\theta_{k}) | \lesssim V^{p-1/2}(\theta_k) \, \big|\theta_{k+1}-\theta_k\big| +  \big|\theta_{k+1}-\theta_k\big|^{2p}$. From the bound \eqref{eq.bound.delta.theta} and the assumption that $\EE[H(\theta,\calU)^{2p_H}] \lesssim V^{p_H}(\theta)$ it follows that the quantity in Equation \eqref{eq.martingale.a.s.bounded} is less than a constant multiple of
\begin{equation*}
\sum_{k \geq m_0}
 \frac{\EE\big[ \, V^{2p}(\theta_k) \, \big] \times \delta_k}{T^2(k)}.
\end{equation*}
Since $\E{  V^{2p}(\theta_k) }$ is uniformly bounded for any $p \leq p_H / 2$ and $\sum_{m \geq m_0} \delta_m / T^2(m) < \infty$ (because the sum $\sum_m T^{-1}(m+1) - T^{-1}(m)$ is finite), the conclusion follows.

\item 
{\bf Proof of $\pi(V^p)\; < \; \infty$ for any $p \geq 0$.}\\
\noindent
Since $V(\theta) \lesssim 1 + \|\theta\|^2$, the drift condition \eqref{eq.lyapunov.drift} yields that Theorem $2.1$ of \citep{roberts1996exponential} holds. Moreover, the bound $V^{p-1}(\theta) \leq C_{\epsilon} + \epsilon \, V^p(\theta)$ implies that there are constants $\alpha_{p,*}. \beta_{p,*} >0$ such that
\begin{equation}
\A V^p(\theta) \leq -\alpha_{p,*} \, V^p(\theta) + \beta_{p,*}
\end{equation}
where $\A$ is the generator of the Langevin diffusion \eqref{eq:overdampedLangevin}. Theorem $2.2$ of \citep{roberts1996exponential} gives the conclusion.
\end{itemize}

\noindent
{\bf Proof that $\sup_{m \geq 1} \; \pi^{\omega}_m(V^p)\; < \; \infty$ for any $p \leq p_H/2$.}\\
\noindent
One needs to prove that the sequence 
$[1/\Omega_m] \times \sum_{k=m_0}^m \omega_{k+1} V^p(\theta_k)$  
is almost surely bounded. The bound $\delta_{k+1} V^p(\theta_k) \lesssim \delta_{k+1} \, \beta_p / \alpha_p -\EE_k[ V^p(\theta_{k+1}) - V^p(\theta_k)] / \alpha_p$ yields that $\pi^{\omega}_m(V^p)$ is less than a constant multiple of
\begin{eqnarray*}
1 + \frac{(\omega_{m_0} / \delta_{m_0}) \, V^p(\theta_{m_0})}{T_m} 
&+ \Omega^{-1}(m) \, \sum_{k=m_0+1}^{m} (\omega_k / \delta_k) \, \Big\{ V^p(\theta_{k+1}) - \EE_{k}[V^p(\theta_{k+1})] \Big\} \\
&+ \Omega^{-1}(m) \,  \sum_{k=m_0}^{m-1} \Delta(\omega_k / \delta_k) \, V^p(\theta_{k}).
\end{eqnarray*}
To conclude the proof, we establish that the following limits hold almost surely,
\begin{eqnarray}
\label{eq.martingale.cv.0.weighted}
\,& \lim_{m \to \infty}
\, \Omega^{-1}(m) \, \sum_{k=m_0+1}^{m} (\omega_k / \delta_k) \, \Big\{ V^p(\theta_{k+1}) - \EE_{k}[V^p(\theta_{k+1})] \Big\} = 0\\
\label{eq.limsup.weighted}
\,& \lim_{m \to \infty} \;
\, \Omega^{-1}(m) \,  \sum_{k=m_0}^{m-1} \Delta(\omega_k / \delta_k) \, V^p(\theta_{k}) = 0.
\end{eqnarray}
To prove Equation \eqref{eq.martingale.cv.0.weighted} it suffices to use the assumption that $\sum_{m \geq 0} \omega^2_m / [\delta_m \Omega^2_m] < \infty$ and then follow the same approach used to establish that the quantity \eqref{eq.martingale.a.s.bounded} is finite. Lemma \ref{lem.MR} shows that to prove Equation \eqref{eq.limsup.weighted} it suffices to verify that
\begin{eqnarray*}
\EE \Big[ \sum_{m \geq 0} \, \big| \Delta(\omega_m / \delta_m) \big| \, V^p(\theta_{m}) / \Omega_m \Big] < \infty.
\end{eqnarray*}
This directly follows from the assumption that $\sum_{m \geq 0} \, \big| \Delta(\omega_m / \delta_m) \big| / \Omega_m < \infty$ and the fact that $\sup_{m \geq 0} \, \EE[V^p(\theta_{m})]$ is finite.

\vskip 0.2in
\bibliography{stochasticGradients}

\begin{thebibliography}{56}
\providecommand{\natexlab}[1]{#1}
\providecommand{\url}[1]{\texttt{#1}}
\expandafter\ifx\csname urlstyle\endcsname\relax
  \providecommand{\doi}[1]{doi: #1}\else
  \providecommand{\doi}{doi: \begingroup \urlstyle{rm}\Url}\fi

\bibitem[Ahmed et~al.(2012)Ahmed, Aly, Gonzalez, Narayanamurthy, and
  Smola]{ahmed2012scalable}
Amr Ahmed, Moahmed Aly, Joseph Gonzalez, Shravan Narayanamurthy, and
  Alexander~J Smola.
\newblock Scalable inference in latent variable models.
\newblock In \emph{Proceedings of the ACM international conference on Web
  search and data mining}, pages 123--132. ACM, 2012.

\bibitem[Ahn et~al.(2012)Ahn, Korattikara, and Welling]{ahn2012bayesian}
Sungjin Ahn, Anoop Korattikara, and Max Welling.
\newblock {B}ayesian posterior sampling via stochastic gradient {F}isher
  scoring.
\newblock In \emph{Proceedings of the International Conference on Machine
  Learning}, 2012.

\bibitem[Ahn et~al.(2014)Ahn, Shahbaba, and Welling]{ahn2014distribuetd}
Sungjin Ahn, Babak Shahbaba, and Max Welling.
\newblock Distributed stochastic gradient {MCMC}.
\newblock In \emph{Proceedings of the International Conference on Machine
  Learning}, 2014.

\bibitem[Amari and Nagaoka(2007)]{amari2007methods}
Shun-ichi Amari and Hiroshi Nagaoka.
\newblock \emph{Methods of information geometry}, volume 191.
\newblock American Mathematical Society, 2007.

\bibitem[Bardenet et~al.(2014)Bardenet, Doucet, and Holmes]{BaDoHo14}
R{\'e}mi Bardenet, Arnaud Doucet, and Chris~C. Holmes.
\newblock Towards scaling up {MCMC}: an adaptive subsampling approach.
\newblock accepted in Proceedings of the International Conference on Machine
  Learning (ICML), 2014.

\bibitem[Beskos et~al.(2008)Beskos, Roberts, Stuart, and Voss]{beskos2008mcmc}
Alexandros Beskos, Gareth Roberts, Andrew Stuart, and Jochen Voss.
\newblock {MCMC} methods for diffusion bridges.
\newblock \emph{Stochastics and Dynamics}, 8\penalty0 (03):\penalty0 319--350,
  2008.

\bibitem[Billingsley(1995)]{Billingsley95}
Patrick Billingsley.
\newblock \emph{{Probability and Measure}}.
\newblock Wiley-Interscience, 3 edition, 1995.

\bibitem[Bottou(2010)]{bottou2010large}
L{\'e}on Bottou.
\newblock Large-scale machine learning with stochastic gradient descent.
\newblock In \emph{Proceedings of COMPSTAT'2010}, pages 177--186. Springer,
  2010.

\bibitem[Chen et~al.(2014)Chen, Fox, and Guestrin]{chen2014stochastic}
Tianqi Chen, Emily~B Fox, and Carlos Guestrin.
\newblock Stochastic gradient hamiltonian monte carlo.
\newblock \emph{arXiv preprint arXiv:1402.4102}, 2014.

\bibitem[Cotter et~al.(2013)Cotter, Roberts, Stuart, and White]{cotter2013mcmc}
SL~Cotter, GO~Roberts, AM~Stuart, and David White.
\newblock {MCMC} methods for functions: modifying old algorithms to make them
  faster.
\newblock \emph{Statistical Science}, 28\penalty0 (3):\penalty0 424--446, 2013.

\bibitem[Duane et~al.(1987)Duane, Kennedy, Pendleton, and
  Roweth]{duane1987hybrid}
Simon Duane, Anthony~D Kennedy, Brian~J Pendleton, and Duncan Roweth.
\newblock Hybrid monte carlo.
\newblock \emph{Physics letters B}, 195\penalty0 (2):\penalty0 216--222, 1987.

\bibitem[Ethier and Kurtz(1986)]{Ethier1986Markov}
Stewart~N. Ethier and Thomas~G. Kurtz.
\newblock \emph{Markov processes}.
\newblock Wiley Series in Probability and Mathematical Statistics: Probability
  and Mathematical Statistics. John Wiley \& Sons Inc., New York, 1986.
\newblock Characterization and convergence.

\bibitem[Girolami and Calderhead(2011)]{MR2814492}
Mark Girolami and Ben Calderhead.
\newblock Riemann manifold {L}angevin and {H}amiltonian {M}onte {C}arlo
  methods.
\newblock \emph{Journal of the Royal Statistical Society. Series B. Statistical
  Methodology}, 73\penalty0 (2):\penalty0 123--214, 2011.
\newblock With discussion and a reply by the authors.

\bibitem[Gonzalez(2014)]{gonzalez2014emerging}
Joseph Gonzalez.
\newblock Emerging systems for large-scale machine learning.
\newblock ICML Tutorial, 2014.

\bibitem[Hairer et~al.(2014)Hairer, Stuart, and Vollmer]{hairer2011spectral}
Martin Hairer, Andrew Stuart, and Sebastian Vollmer.
\newblock Spectral gaps for a metropolis-hastings algorithm in infinite
  dimensions.
\newblock \emph{The Annals of Applied Probability}, (to appear), 2014.

\bibitem[Hall and Heyde(1980)]{hall1980martingale}
Peter Hall and Christopher~C Heyde.
\newblock \emph{Martingale limit theory and its application}.
\newblock Academic press New York, 1980.

\bibitem[Harchaoui and Jaggi(2014)]{harchaoui2014frank}
Zaid Harchaoui and Martin Jaggi.
\newblock {F}rank-{W}olfe and greedy optimization for learning with big data.
\newblock ICML Tutorial, 2014.

\bibitem[Hinton et~al.(2012)Hinton, Deng, Yu, Dahl, Mohamed, Jaitly, Senior,
  Vanhoucke, Nguyen, and Sainath]{hinton2012deep}
Geoffrey Hinton, Li~Deng, Dong Yu, George~E Dahl, Abdel-rahman Mohamed, Navdeep
  Jaitly, Andrew Senior, Vincent Vanhoucke, Patrick Nguyen, and Tara~N Sainath.
\newblock Deep neural networks for acoustic modeling in speech recognition: The
  shared views of four research groups.
\newblock \emph{Signal Processing Magazine, IEEE}, 29\penalty0 (6):\penalty0
  82--97, 2012.

\bibitem[Hirschhorn and Daly(2005)]{hirschhorn2005genome}
Joel~N Hirschhorn and Mark~J Daly.
\newblock Genome-wide association studies for common diseases and complex
  traits.
\newblock \emph{Nature Reviews Genetics}, 6\penalty0 (2):\penalty0 95--108,
  2005.

\bibitem[Hoffman et~al.(2010)Hoffman, Blei, and Bach]{hoffman2010online}
Matthew~D Hoffman, David~M Blei, and Francis~R Bach.
\newblock Online learning for latent dirichlet allocation.
\newblock In \emph{NIPS}, volume~2, page~5, 2010.

\bibitem[Hoffman et~al.(2013)Hoffman, Blei, Wang, and Paisley]{Hoffman2013SVI}
Matthew~D. Hoffman, David~M. Blei, Chong Wang, and John Paisley.
\newblock Stochastic variational inference.
\newblock \emph{Journal of Machine Learning Research}, 14\penalty0
  (1):\penalty0 1303--1347, 2013.

\bibitem[Jarner and Roberts(2007)]{jarner2007convergence}
S{\o}ren~F Jarner and Gareth~O Roberts.
\newblock Convergence of heavy-tailed monte carlo markov chain algorithms.
\newblock \emph{Scandinavian Journal of Statistics}, 34\penalty0 (4):\penalty0
  781--815, 2007.

\bibitem[Kamatani(2014)]{kamatani2014rate}
Kengo Kamatani.
\newblock Rate optimality of random walk metropolis algorithm in high-dimension
  with heavy-tailed target distribution.
\newblock \emph{arXiv preprint arXiv:1406.5392}, 2014.

\bibitem[Korattikara et~al.(2014)Korattikara, Chen, and
  Welling]{korattikara2014austerity}
Anoop Korattikara, Yutian Chen, and Max Welling.
\newblock Austerity in {MCMC} land: Cutting the {M}etropolis-{H}astings budget.
\newblock In \emph{Proceedings of the International Conference on Machine
  Learning}, 2014.

\bibitem[Krizhevsky et~al.(2012)Krizhevsky, Sutskever, and
  Hinton]{krizhevsky2012imagenet}
Alex Krizhevsky, Ilya Sutskever, and Geoffrey~E Hinton.
\newblock Imagenet classification with deep convolutional neural networks.
\newblock In \emph{NIPS}, volume~1, page~4, 2012.

\bibitem[Lamberton and Pages(2002)]{lamberton2002recursive}
Damien Lamberton and Gilles Pages.
\newblock Recursive computation of the invariant distribution of a diffusion.
\newblock \emph{Bernoulli}, 8\penalty0 (3):\penalty0 367--405, 2002.

\bibitem[Lamberton and Pages(2003)]{lamberton2003recursive}
Damien Lamberton and Gilles Pages.
\newblock Recursive computation of the invariant distribution of a diffusion:
  the case of a weakly mean reverting drift.
\newblock \emph{Stochastics and dynamics}, 3\penalty0 (04):\penalty0 435--451,
  2003.

\bibitem[Lemaire(2007)]{lemaire2007adaptive}
Vincent Lemaire.
\newblock An adaptive scheme for the approximation of dissipative systems.
\newblock \emph{Stochastic Processes and their Applications}, 117\penalty0
  (10):\penalty0 1491--1518, 2007.

\bibitem[Livingstone and Girolami(2014)]{livingstone2014information}
Samuel Livingstone and mark Girolami.
\newblock Information-geometric markov chain monte carlo methods using
  diffusions.
\newblock \emph{arXiv preprint arXiv:1403.7957}, 2014.

\bibitem[Maruyama(1955)]{maruyama1955continuous}
Gisiro Maruyama.
\newblock Continuous markov processes and stochastic equations.
\newblock \emph{Rendiconti del Circolo Matematico di Palermo}, 4\penalty0
  (1):\penalty0 48--90, 1955.

\bibitem[Mattingly et~al.(2002)Mattingly, Stuart, and
  Higham]{mattingly2002ergodicity}
Jonathan~C Mattingly, Andrew~M Stuart, and Desmond~J Higham.
\newblock Ergodicity for sdes and approximations: locally lipschitz vector
  fields and degenerate noise.
\newblock \emph{Stochastic processes and their applications}, 101\penalty0
  (2):\penalty0 185--232, 2002.

\bibitem[Mattingly et~al.(2012)Mattingly, Pillai, and
  Stuart]{mattingly2012diffusion}
Jonathan~C Mattingly, Natesh~S Pillai, and Andrew~M Stuart.
\newblock Diffusion limits of the random walk metropolis algorithm in high
  dimensions.
\newblock \emph{The Annals of Applied Probability}, 22\penalty0 (3):\penalty0
  881--930, 2012.

\bibitem[McCarthy et~al.(2008)McCarthy, Abecasis, Cardon, Goldstein, Little,
  Ioannidis, and Hirschhorn]{mccarthy2008genome}
Mark~I McCarthy, Gon{\c{c}}alo~R Abecasis, Lon~R Cardon, David~B Goldstein,
  Julian Little, John~PA Ioannidis, and Joel~N Hirschhorn.
\newblock Genome-wide association studies for complex traits: consensus,
  uncertainty and challenges.
\newblock \emph{Nature Reviews Genetics}, 9\penalty0 (5):\penalty0 356--369,
  2008.

\bibitem[Minsker et~al.(2014)Minsker, Srivastava, Lin, and
  Dunson]{minsker2014robust}
Stanislav Minsker, Sanvesh Srivastava, Lizhen Lin, and David~B Dunson.
\newblock Robust and scalable bayes via a median of subset posterior measures.
\newblock \emph{arXiv preprint arXiv:1403.2660}, 2014.

\bibitem[Neal(2010)]{neal2010mcmc}
Radford.~M. Neal.
\newblock {MCMC} using hamiltonian dynamics.
\newblock \emph{Handbook of Markov Chain Monte Carlo: Methods and
  Applications}, page 113, 2010.

\bibitem[Neiswanger et~al.(2013)Neiswanger, Wang, and
  Xing]{neiswanger2013asymptotically}
Willie Neiswanger, Chong Wang, and Eric Xing.
\newblock Asymptotically exact, embarrassingly parallel mcmc.
\newblock \emph{arXiv preprint arXiv:1311.4780}, 2013.

\bibitem[Pages and Panloup(2012)]{pages2012ergodic}
Gilles Pages and Fabien Panloup.
\newblock Ergodic approximation of the distribution of a stationary diffusion:
  rate of convergence.
\newblock \emph{The Annals of Probability}, 22\penalty0 (3):\penalty0
  1059--1100, 2012.

\bibitem[Panloup(2008)]{panloup2008recursive}
Fabien Panloup.
\newblock Recursive computation of the invariant measure of a stochastic
  differential equation driven by a l{\'e}vy process.
\newblock \emph{The Annals of Applied Probability}, 18\penalty0 (2):\penalty0
  379--426, 2008.

\bibitem[Patterson and Teh(2013{\natexlab{a}})]{PatTeh2013a}
S.~Patterson and Y.~W. Teh.
\newblock Stochastic gradient {R}iemannian {L}angevin dynamics on the
  probability simplex.
\newblock In \emph{Advances in Neural Information Processing Systems},
  2013{\natexlab{a}}.

\bibitem[Patterson and Teh(2013{\natexlab{b}})]{patterson2013stochastic}
Sam Patterson and Yee~Whye Teh.
\newblock {S}tochastic {G}radient {R}iemannian {L}angevin dynamics on the
  probability simplex.
\newblock In \emph{Advances in Neural Information Processing Systems},
  2013{\natexlab{b}}.

\bibitem[Pillai et~al.(2012)Pillai, Stuart, and Thi{\'e}ry]{pillai2012optimal}
Natesh~S Pillai, Andrew~M Stuart, and Alexandre~H Thi{\'e}ry.
\newblock Optimal scaling and diffusion limits for the {L}angevin algorithm in
  high dimensions.
\newblock \emph{The Annals of Applied Probability}, 22\penalty0 (6):\penalty0
  2320--2356, 2012.

\bibitem[Robbins and Monro(1951{\natexlab{a}})]{robbins1951stochasticApprox}
H.~Robbins and S.~Monro.
\newblock A {S}tochastic {A}pproximation {M}ethod.
\newblock \emph{The Annals of Mathemaitcal Statistics}, 22:\penalty0 400--407,
  1951{\natexlab{a}}.
\newblock ISSN 0003-4851.

\bibitem[Robbins and Monro(1951{\natexlab{b}})]{robbins1951stochastic}
Herbert Robbins and Sutton Monro.
\newblock A stochastic approximation method.
\newblock \emph{The annals of mathematical statistics}, pages 400--407,
  1951{\natexlab{b}}.

\bibitem[Roberts and Rosenthal(1998)]{Roberts1998OptimalSMala}
G.~O. Roberts and J.~S. Rosenthal.
\newblock Optimal {S}caling of {D}iscrete {A}pproximations to {L}angevin
  {D}iffusions.
\newblock \emph{J. R. Stat. Soc. Ser. B Stat. Methodol.}, 60\penalty0
  (1):\penalty0 255--268, 1998.
\newblock ISSN 1369-7412.
\newblock \doi{10.1111/1467-9868.00123}.
\newblock URL
  \url{http://0-dx.doi.org.pugwash.lib.warwick.ac.uk/10.1111/1467-9868.00123}.

\bibitem[Roberts and Stramer(2002)]{roberts2002langevin}
Gareth~O Roberts and Osnat Stramer.
\newblock {L}angevin diffusions and {M}etropolis-{H}astings algorithms.
\newblock \emph{Methodology and computing in applied probability}, 4\penalty0
  (4):\penalty0 337--357, 2002.

\bibitem[Roberts and Tweedie(1996)]{roberts1996exponential}
Gareth~O Roberts and Richard~L Tweedie.
\newblock Exponential convergence of langevin distributions and their discrete
  approximations.
\newblock \emph{Bernoulli}, pages 341--363, 1996.

\bibitem[Rudin(1986)]{rudin1986real}
Walter Rudin.
\newblock \emph{Real and complex analysis (3rd)}.
\newblock New York: McGraw-Hill Inc, 1986.

\bibitem[S.~Ahn and Welling(2012)]{AhnKorWel2012}
A.~Koratticara S.~Ahn and M.~Welling.
\newblock Bayesian posterior sampling via stochastic gradient fisher scoring.
\newblock In \emph{ICML}, 2012.

\bibitem[Sato(2001)]{sato2001online}
Masa-Aki Sato.
\newblock Online model selection based on the variational bayes.
\newblock \emph{Neural Computation}, 13\penalty0 (7):\penalty0 1649--1681,
  2001.

\bibitem[Shiryaev(1996)]{shiryaev1996probability}
Albert~N Shiryaev.
\newblock \emph{Probability}.
\newblock Graduate Texts in Mathematics, 1996.

\bibitem[Srebro and Tewari(2010)]{srebro2010stochastic}
Nathan Srebro and Ambuj Tewari.
\newblock Stochastic optimization for machine learning.
\newblock ICML Tutorial, 2010.

\bibitem[Stramer and Tweedie(1999{\natexlab{a}})]{stramer1999langevin1}
O~Stramer and RL~Tweedie.
\newblock Langevin-type models {I}: Diffusions with given stationary
  distributions and their discretizations*.
\newblock \emph{Methodology and Computing in Applied Probability}, 1\penalty0
  (3):\penalty0 283--306, 1999{\natexlab{a}}.

\bibitem[Stramer and Tweedie(1999{\natexlab{b}})]{stramer1999langevin2}
O~Stramer and RL~Tweedie.
\newblock Langevin-type models {II}: Self-targeting candidates for {MCMC}
  algorithms.
\newblock \emph{Methodology and Computing in Applied Probability}, 1\penalty0
  (3):\penalty0 307--328, 1999{\natexlab{b}}.

\bibitem[Thrun(2010)]{thrun2010toward}
Sebastian Thrun.
\newblock Toward robotic cars.
\newblock \emph{Communications of the ACM}, 53\penalty0 (4):\penalty0 99--106,
  2010.

\bibitem[Wang et~al.(2005)Wang, Barratt, Clayton, and Todd]{wang2005genome}
William~YS Wang, Bryan~J Barratt, David~G Clayton, and John~A Todd.
\newblock Genome-wide association studies: theoretical and practical concerns.
\newblock \emph{Nature Reviews Genetics}, 6\penalty0 (2):\penalty0 109--118,
  2005.

\bibitem[Welling and Teh(2011)]{welling2011bayesian}
Max Welling and Yee~Whye Teh.
\newblock Bayesian learning via {S}tochastic {G}radient {L}angevin {D}ynamics.
\newblock In \emph{Proceedings of the 28th International Conference on Machine
  Learning (ICML-11)}, pages 681--688, 2011.

\end{thebibliography}

\end{document}